\newtheorem{theorem}{Theorem}
\newtheorem{lemma}{Lemma}
\newtheorem{remark}{Remark}
\newtheorem{corollary}{Corollary}
\newtheorem{proposition}{Proposition}
\newtheorem*{theorem*}{Theorem}
\newtheorem*{example*}{Example} 
\newtheorem*{definition*}{Definition}
\newtheorem*{lemma*}{Lemma}
\newtheorem*{remark*}{Remark}
\newtheorem*{corollary*}{Corollary}
\newtheorem*{proposition*}{Proposition}
\newtheorem*{assumption*}{Assumption}
\newtheorem*{claim*}{Claim}
\newtheoremstyle{TheoremNum}
        {\topsep}{\topsep}              
        {\itshape}                      
        {}                              
        {\bfseries}                     
        {.}                             
        { }                             
        {\thmname{#1}\thmnote{ \bfseries #3}}
\theoremstyle{TheoremNum}
\newtheoremstyle{LemmaNum}
        {\topsep}{\topsep}              
        {\itshape}                      
        {}                              
        {\bfseries}                     
        {.}                             
        { }                             
        {\thmname{#1}\thmnote{ \bfseries #3}}
\theoremstyle{LemmaNum}
\newcommand{\adj}{ \mathrm{adj} } 
\newcommand{\CHA}{ \mathrm{\texttt{CHA}}}
\newcommand{\St}{\mathrm{St}}
\newcommand{\e}{ \bm{ e } } 
\newcommand{\Stein}{\mathrm{Stein}}
\newcommand{\entry}{\mathrm{entry}}
\renewcommand{\H}{ \mathrm{H} }
\newcommand{\A}{ \mathrm{A} }
\newcommand{\I}{ \mathcal{I} }
\renewcommand{\Pr}{ \mathbb{P} }
\newcommand{\ol}{ \overline }
\newcommand{\Hess}{\mathrm{Hess}}
\newcommand{\inj}{ \mathrm{inj} }
\newcommand{\M}{\mathcal{M}}
\newcommand{\Exp}{{\mathrm{Exp}}}
\newcommand{\R}{\mathbb{R}}
\newcommand{\B}{\mathbb{B}}
\renewcommand{\S}{\mathbb{S}}
\newcommand{\E}{\mathbb{E}}
\renewcommand{\[}{\left[ }
\renewcommand{\]}{\right] }
\newcommand{\<}{\left< }
\renewcommand{\>}{\right> }
\renewcommand{\(}{\left( }
\renewcommand{\)}{\right) }
\newcommand{\wt}{\widetilde }
\newcommand{\wh}{\widehat }
\begin{document} 

\title{On Sharp Stochastic Zeroth Order Hessian Estimators over Riemannian Manifolds} 

\author{Tianyu Wang\footnote{wangtianyu@fudan.edu.cn}}  

\date{} 

\maketitle

\begin{abstract}
    We study Hessian estimators for functions defined over an $n$-dimensional complete analytic Riemannian manifold. We introduce new stochastic zeroth-order Hessian estimators using $O (1)$ function evaluations. We show that, for an analytic real-valued function $f$, our estimator achieves a bias bound of order $ O \left( \gamma \delta^2 \right) $, where $ \gamma $ depends on both the Levi-Civita connection and function $f$, and $\delta$ is the finite difference step size. To the best of our knowledge, our results provide the first bias bound for Hessian estimators that explicitly depends on the geometry of the underlying Riemannian manifold. We also study downstream computations based on our Hessian estimators. The supremacy of our method is evidenced by empirical evaluations.
\end{abstract}

\section{Introduction} 

Hessian computation is one of the central tasks in optimization, machine learning and related fields. Understanding the landscape of the objective function is in many cases the first step towards solving a mathematical programming problem, and Hessian is one of the key quantities that depict the function landscape.  
Often in real-world scenarios, the objective function is a black-box, and its Hessian is not directly computable. In these cases, zeroth-order Hessian computation techniques are needed if one wants to understand the function landscape via its Hessian. 

To this end, we introduce new zeroth-order methods for estimating a function's Hessian at any given point over an $n$-dimensional complete analytic Riemannian manifold $(\M, g)$. 
For $p \in \M$ and an analytic real-valued function $f$ defined over a complete analytic Riemannian manifold $\M$, the Hessian estimator of $f$ at $ p $ is 
\begin{align} 
    \widehat{\H} {f} (p; v,w; \delta) := \frac{n^2}{\delta^2} f ( \Exp_p ( \delta v + \delta w) ) v \otimes w , \label{eq:def-est} 
\end{align} 
where $\Exp_p$ is the exponential map, $v, w$ are independently sampled from the unit sphere in $T_p \M$, $v\otimes w$ denotes the tensor product of $v$ and $w$ ($v,w \in T_p \M$), and $\delta$ is the finite-difference step size. 

Our Hessian estimator satisfies 
\begin{align*} 
    &\; \left\| \E_{ v,w \overset{i.i.d.}{\sim} \S_p } \[ \wh{\H} f (p; v,w; \delta ) \] - \Hess f (p) \right\| \\
    =& \;  O \( \delta^2 \sup_{u \in \S_p} \left| \E_{ v,w \overset{i.i.d.}{\sim} \S_p } \[ \frac{n}{n+2} \nabla_{ u}^2 \( \nabla_v^2 + \nabla_w^2 \) f (p)  \] \right| \), 
\end{align*} 
where $\| \cdot \|$ is the $\infty$-Schatten norm, $ \S_p$ is the unit sphere in $T_p \M$, $\Hess f (p)$ is the Hessian of $f$ at $p$, and $\nabla $ is the covariant derivative associated with the Riemannian metric. 

This bias bound improves previously known results in two ways: 
\begin{enumerate} 
    \item It provides, via the Levi-Civita connection, the first bias bound for Hessian estimators that explicitly depends on the local geometry of the underlying space; 
    \item It significantly improves best previously known bias bound for $O(1)$-evaluation Hessian estimators over Riemannian manifolds, which is of order $ O(L_2 n^2 \delta) $, where $ L_2 $ is the Lipschitz constant for the Hessian, $n$ is the dimension of the manifold, and $\delta$ is the finite-difference step size. See Remark \ref{remark} for details. 
\end{enumerate} 

We also study downstream computations for our Hessian estimator. More specifically, we introduce novel provably accurate methods for computing adjugate and inversion of the Hessian matrix, all using zeroth order information only. These zeroth order computation methods may be used as primers for further applications. The supremacy of our method over existing methods is evidenced by careful empirical evaluations. 

\subsection*{Related Works} 

Zeroth order optimization has attracted the attention of many researchers. Under this broad umbrella, there stands the Bayesian optimization methods (See the review article by \citet[][]{shahriari2015taking} for an overview), comparison-based methods \citep[e.g.,][]{nelder1965simplex}, genetic algorithms \citep[e.g.,][]{goldberg1988genetic}, best arm identification from the multi-armed bandit community \citep[e.g.,][]{bubeck2009pure,audibert2010best}, and many others (See the book by \citet[][]{conn2009introduction} for an overview). Among all these zeroth order optimization schemes, one classic and prosperous line of works focuses on estimating higher order derivatives using zeroth order information.

Zeroth order gradient estimators make up a large portion of derivative estimation literature. In the past few years, \citet{flaxman2005online} studied the stochastic gradient estimator using a single-point for the purpose of bandit learning. \citet{duchi2015optimal} studied stabilization of the stochastic gradient estimator via two-points (or multi-points) evaluations. \citet{nesterov2017random,li2020stochastic} studied gradient estimators using Gaussian smoothing, and investigated downstream optimization methods using the estimated gradient. Recently, \citet{wang2021GW} studied stochastic gradient estimators over Riemannian manifolds, via the lens of the Greene-Wu convolution.

Zeroth order Hessian estimation is also a central topic in derivative estimation. In the control community, gradient-based Hessian estimators were introduced for iterative optimization algorithms, and asymptotic convergence was proved \citep{spall2000adaptive}. 
Apart from this asymptotic result, no generic non-asymptotic bound for $ O (1) $-evaluation Hessian estimators are well investigated until recently. Based on the Stein's identity \citep{10.1214/aos/1176345632}, \citet{balasubramanian2021zeroth} designed the Stein-type Hessian estimator, and combined it with cubic regularized Newton's method \citep{nesterov2006cubic} for non-convex optimization. \citet{li2020stochastic} generalizes the Stein-type Hessian estimators to Riemannian manifolds embedded in Euclidean spaces. Several authors have also considered variance and higher order moments of Hessian (and gradient) estimators \citep{li2020stochastic,balasubramanian2021zeroth,fengwang2022}. In particular, \citet{fengwang2022} showed that estimators via random orthogonal frames from Steifel's manifolds have significantly smaller variance.
Yet in the case of non-trivial curvature \citep{li2020stochastic}, no geometry-aware bias bound has been given prior to our work. 

\section{Preliminaries and Conventions}
\label{sec:pre}


For better readability, we list here some notations and conventions that will be used throughout the rest of this paper. 

\begin{itemize}
    \item For any $ p \in \M$, let $ U_p $ denote the open set near $p$ that is diffeomorphic to a subset of $\R^n$ via the local normal coordinate chart $\phi$. Define the distance $d_p (q_1, q_2)$ ($q_1,q_2 \in U_p$) such that  
    \begin{align*} 
        d_p (q_1, q_2) = d_{\text{Euc}} ( \phi (q_1) , \phi (q_2) ). 
    \end{align*} 
    where $d_{\text{Euc}}$ is the Euclidean distance in $\R^n$. 
    
    \item \textbf{(A0, Analyticity Assumption)} Throughout the paper, we assume that, both the Riemannian metric and the function of interest are analytic.
    
    \item The injectivity radius of $p \in \M$ (written $\inj (p)$) is defined as the radius of the largest geodesic ball that is contained in $U_p$. \textbf{(A1, Small Step Size Assumption)} Throughout the paper, 
    we assume that the finite difference step size $\delta$ of the estimator at point $p \in \M$ satisfies $\delta \le \frac{\inj (p)}{2}$. 

    \item All musical isomorphisms are omitted when there is no confusion. 
    
    
    \item For any $p \in \M$ and $\alpha > 0$, we use $ \alpha \S_p $ (resp. $\alpha \B_p$) to denote the origin-centered sphere (resp. ball) in $T_p \M$ with radius $\alpha$. For simplicity, we  write $ \S_p = 1 \S_p $ (resp. $\B_p = 1 \B_p$). It is worth emphasizing that $\S_p$ and $ \B_p $ are in $T_p \M$. They are different from geodesic balls which reside in $\M$. 
    
    
    \item For $p \in \M$ and $q \in U_p $, we use $ \I_p^q: T_p \M \rightarrow T_{q} \M $ to denote the parallel transport from $ T_p \M $ to $ T_q \M $ along the distance-minimizing geodesic connecting $p$ and $q$. For any $p \in \M$, $ u \in T_p \M $ and $q \in U_p$, define $u_{q} = \I_{p}^{q} (u)$. 
    More generally, $ \I_p^q $ denotes the parallel transport along the distance-minimizing geodesic from $p$ to $q$, among the fiber bundle that is compatible with the Riemannian structure. 
    
    \item We will use the double exponential map notation \citep{gavrilov2007double}. For any $ p \in \M$ and $ u , v \in T_p \M $ such that $ \Exp_p (u) \in U_p $, we write $ \Exp_p^2 ( u, v ) = \Exp_{\Exp_{p} (u)} (v_{\Exp_{p} (u) }) $. 
    
    \item \textbf{(Definition of Hessian \citep[e.g.,][]{petersen2006riemannian})} Over an $n$-dimensional complete Riemannian manifold $\M$, the Hessian of a smooth function $f : \M \rightarrow \R $ at $p$ is a bilinear form $ \Hess f (p) : T_p \M \times T_p \M \rightarrow \R$ such that, for all $u,v \in T_p \M$, $ \Hess f (p) (u,v) = \< \nabla_v d f \big|_p , u \> $. Since the Levi-Civita connection is torsion-free, the Hessian is symmetric: $\Hess f (p) (u,v) = \Hess f (p) (v,u)$ for all $u,v \in T_p \M$. For a smooth function $f$, its Hessian satisfies (e.g., Chapter 5.4 of \citep[][]{absil2009optimization}), for any $p \in \M$ and any $v \in T_p \M$, 
    \begin{align} 
        \Hess f (p) (v,v) 
        = 
        \lim_{\tau \rightarrow 0 } \frac{ f (\Exp_p (\tau v )) - 2 f ( p ) + f (\Exp_p ( - \tau v )) }{ \tau^2 } 
        = 
        \nabla_v^2 f (p) . \label{eq:Hess-dev} 
    \end{align} 
    \emph{For simplicity and coherence with the notations in the Euclidean case}, we write $ u^\top \Hess f (p) v := \Hess f (p) (u, v) $ for any $u,v \in T_p \M$. 
    
    \item Consider a Riemannian manifold $(\M, g)$, a point $p \in \M$, and any symmetric bilinear form $A : T_p \M  \times T_p \M \rightarrow \R$. The $g$-induced $\infty$-Schatten norm (the operator norm) of $A$ is defined as 
    \begin{align*} 
        \| A \| = \sup_{u \in T_p \M, \| u \| = 1} | u^\top A u |. 
    \end{align*} 
    When it is clear from context, we simply use $ \infty $-Schatten norm to refer to $g$-induced $ \infty $-Schatten norm. 
    
    \item \emph{Note.} When applied to a tangent vector, $\| \cdot \|$ is the standard norm induced by the Riemannian metric. When applied to a symmetric bilinear form, $\| \cdot \|$ is the $ \infty $-Schatten norm defined above. 
    
    
    
\end{itemize}

\section{Zeroth Order Hessian Estimation} 
\label{sec:hess-comp} 

For $p \in \M$ and $f : \M \rightarrow \R$, the Hessian of $f$ at $ p $ can be estimated by 
\begin{align*}
    \widehat{\H} {f} (p; v,w; \delta) = \frac{n^2}{\delta^2} f ( \Exp_p ( \delta v + \delta w) ) v \otimes w ,
\end{align*}
where $ v,w $ are independently uniformly sampled from $\S_p$ and $ \delta $ is the finite difference step size. To study the bias of this estimator, we consider a function $\wt{f}^\delta$ defined as follows. 

For $p \in \M$, a smooth real-valued function $f$ defined over $\M$, and a number $\delta \in (0, \delta_0 ]$, define a function $\wt{f}^\delta$ (at $p$) such that 
\begin{align} 
    \wt{f}^\delta ( p ) = \frac{ 1 }{ \delta^{2n} V_n^2 } \int_{ w \in \delta \B_p } \int_{ v \in \delta \B_p } f ( \Exp_p ( v + w ) ) \, dw \, dv,  \label{eq:def-surrogate} 
\end{align} 
where $V_n$ is the volume of the unit ball in $ T_p \M $ (same as the volume of the unit ball in $\R^n$). Smoothings of this kind have been analytically investigated by Greene and Wu \citep{greene1973subharmonicity,greene1976c,greene1979c}. 
We will first show that $ \Hess \wt{f}^\delta ( p ) = \E_{v,w \overset{i.i.d.}{\sim} \S_p } \[ \wh{\H} {f} (p; v,w; \delta) \] $ in Lemma \ref{lem:surrogate}. Then we derive a bound on $ \left\| \Hess \wt{f}^\delta ( p ) - \Hess {f} ( p ) \right\| $, which gives a bound on $ \left\| \E_{v,w \overset{i.i.d.}{\sim} \S_p } \[ \wh{\H} {f} (p; v,w; \delta) \] - \Hess f (p) \right\| $. 
Henceforth, we will use $ \E_{v,w } $ as a shorthand for $ \E_{v,w \overset{i.i.d.}{\sim} \S_p } $.


\begin{lemma} 
    \label{lem:surrogate} 
    Consider an $n$-dimensional complete analytic Riemannian manifold $(\M,g)$. Consider $p \in \M$, an analytic function $f : \M \rightarrow \R $ and a small number $\delta \in (0, \inj (p) / 2 ]$.
    If $v$ and $w$ are independently randomly sampled from $\S_p$, then it holds that, 
    \begin{align*} 
        \E_{v,w} \[ \wh{\H} f (p; v, w; \delta) \] = \Hess \wt{f}^\delta (p) . 
    \end{align*} 
\end{lemma}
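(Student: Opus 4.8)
The plan is to reduce the stated identity to a purely Euclidean statement by passing to the normal coordinate chart at $p$, and then to move between the ball-averages that define the surrogate and the sphere-averages that define the estimator by integrating by parts once in each ball variable. Fix normal coordinates at $p$, identify $T_p\M$ with $\R^n$ through a $g_p$-orthonormal basis, and write $\ol{f} := f \circ \Exp_p$ for the coordinate representative of $f$, a function on the Euclidean ball of radius $\inj(p)$. Because $\delta \le \inj(p)/2$, every vector $\delta v + \delta w$ with $v,w \in \S_p$ has norm at most $2\delta \le \inj(p)$, so all points and integration variables appearing below stay inside $U_p$, where $\Exp_p$ is a diffeomorphism and every quantity is well defined. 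Under this chart the expectation $\E_{v,w}[\wh{\H} f(p;v,w;\delta)]$, being built only from $f(\Exp_p(\delta v + \delta w)) = \ol{f}(\delta v + \delta w)$ and from $v \otimes w$ with $v,w \in \S_p$, is already a Euclidean object attached to $\ol{f}$; the real work is to recognize $\Hess \wt{f}^\delta(p)$ as the matching Euclidean object.

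For the right-hand side I would use two facts. First, at the centre of a normal chart the Christoffel symbols vanish, so for any smooth $h$ the Riemannian Hessian $\Hess h(p)$ coincides with the ordinary Euclidean Hessian of $h \circ \Exp_p$ at the origin; this removes the connection from the problem at the base point. Second, I would identify the coordinate representative of the surrogate near $p$ with the Euclidean double-ball average
\begin{equation*}
F(y) = \frac{1}{\delta^{2n} V_n^2} \int_{\delta\B_p} \int_{\delta\B_p} \ol{f}(y + v + w) \, dw \, dv ,
\end{equation*}
so that $\Hess \wt{f}^\delta(p)$ becomes the Euclidean Hessian of $F$ at the origin. Differentiating under the integral sign — legitimate since $\ol{f}$ is analytic and the domain is compact — the $(u,u)$ entry of that Hessian is the double-ball average of the second directional derivative $D_u^2 \ol{f}(v + w)$, where $D_u = \sum_i u_i \partial_i$.

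The core of the argument is then an exact manipulation, and it explains why the estimator pairs the single exponential of the sum $\delta v + \delta w$ with the tensor $v \otimes w$: since $\ol{f}(v+w)$ depends on $(v,w)$ only through $v+w$, we have $D_u^2 \ol{f}(v+w) = D_u^{(v)} D_u^{(w)} \ol{f}(v+w)$, the mixed directional derivative taken once in the $v$-variable and once in the $w$-variable, which decouples the two ball integrals. Applying the divergence theorem in the $v$-ball turns $\int_{\delta\B_p} D_u^{(v)}(\cdots)\,dv$ into a surface integral over $\delta\S_p$ weighted by $\<u,v\>/\delta$ (the outward unit normal is $v/\|v\|$ and $\|v\| = \delta$ on the boundary), and the identical step in the $w$-ball contributes the weight $\<u,w\>/\delta$, giving
\begin{equation*}
\Hess \wt{f}^\delta(p)(u,u) = \frac{1}{\delta^{2n} V_n^2} \, \frac{1}{\delta^2} \int_{\delta\S_p} \int_{\delta\S_p} \<u,v\> \<u,w\> \, \ol{f}(v+w) \, dS(w) \, dS(v) .
\end{equation*}
Rescaling the radius-$\delta$ spheres to $\S_p$ by $v = \delta \hat v$, $w = \delta \hat w$ (each surface element contributing $\delta^{n-1}$ and each inner product a factor $\delta$), and writing surface integrals as expectations via $\E_{v,w}[\,\cdot\,] = (nV_n)^{-2} \int_{\S_p} \int_{\S_p} (\cdot) \, dS \, dS$ with $nV_n$ the surface area of $\S_p$, all powers of $\delta$ and of $V_n$ cancel to leave $\frac{n^2}{\delta^2} \E_{v,w}[\ol{f}(\delta v + \delta w) \<u,v\> \<u,w\>]$, which is exactly $\E_{v,w}[\wh{\H} f(p;v,w;\delta)](u,u)$. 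Since $f(\Exp_p(\delta v + \delta w))$ is symmetric in $(v,w)$ and $v,w$ are i.i.d., the estimator's expectation is a symmetric bilinear form, and so is the Hessian (the Levi-Civita connection being torsion-free); agreeing on every diagonal pair $(u,u)$, they agree as tensors.

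I expect the main obstacle to be the second reduction above: justifying that $\Hess \wt{f}^\delta(p)$ is computed as the Euclidean Hessian at the origin of the fixed-base average $F$ of $\ol{f}$. The value $\wt{f}^\delta(p)$ is manifestly $F(0)$, but the Hessian is a second-order object while the surrogate is specified point-by-point through the exponential map at each point, so one must check that at the base point $p$ the intrinsic construction and the fixed-base average share the same second-order jet, i.e. that the curvature corrections to the exponential map do not disturb the Hessian of the surrogate at $p$ itself. Once this reduction is in place, everything downstream is exact — differentiation under the integral followed by two applications of the divergence theorem, with no Taylor truncation — which is why the conclusion is an exact identity rather than a leading-order approximation. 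The geometry of $\M$ does not vanish; it re-enters only in the subsequent estimate of $\|\Hess \wt{f}^\delta(p) - \Hess f(p)\|$, through the fact that the Euclidean Taylor coefficients of $\ol{f} = f \circ \Exp_p$ at the origin are the symmetrized covariant derivatives of $f$ at $p$.
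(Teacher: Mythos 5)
Your proposal is correct and takes essentially the same route as the paper's own proof: both pull everything back through $\Exp_p$ to the fixed-base (``frozen'') double-ball average of $\varphi_p = f\circ \Exp_p$, convert that ball average into a double-sphere average by two applications of the divergence theorem (your splitting of $D_u^2$ into one derivative in the $v$-variable and one in the $w$-variable is exactly the paper's placement of $\partial_i$ on the $v$-integral and $\partial_j$ on the $w$-integral, justified there by dominated convergence), and close with the same normalization $A_{n-1}=nV_n$. The step you single out as the main obstacle --- that the intrinsically defined surrogate $\wt{f}^\delta$ and the fixed-base average $F$ of $\varphi_p$ have the same second-order jet at $p$, so that $\Hess \wt{f}^\delta(p)$ may be computed as the Euclidean Hessian of $F$ at the origin --- is passed over silently in the paper as well, whose final display simply equates $\partial_i\partial_j \wt{f}^\delta(p)$ with $\partial_i\partial_j$ of the frozen integral, so on this point your attempt is no less complete than the paper's own argument.
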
 


\begin{proof}[Proof. ]
    Define $ \varphi_p = f \circ \Exp_p $. 
    By the fundamental theorem of geometric calculus, it holds that~\footnote{Here $\partial_i$ and $\partial_j$ are understood as Einstein's notations. } 
    \begin{align*} 
        \int_{v \in \delta \B_p } \partial_i \int_{ w \in \delta \B_p }  \partial_j \varphi_p (w + v) \, dw \, dv 
        =& 
        \int_{v \in \delta \B_p } \partial_i \int_{ w \in \delta \S_p } \varphi_p (w + v) \frac{ w }{ \| w \| } \, dw \, dv \nonumber  \\ 
        \overset{(i)}{=}& 
        \int_{v \in \delta \S_p } \int_{ w \in \delta \S_p } \varphi_p (w + v) \frac{v \otimes w }{ \| w \| \| v \|  } \, dw \, dv . 
    \end{align*} 
    
    Since $v$ and $w$ are independently uniformly sampled from $\S_p$, it holds that 
    \begin{align*} 
        \int_{ \delta \S_p } \int_{ \delta \S_p } \varphi_p ( w + v ) \frac{ v \otimes w }{ \| v \| \| w \| } \, dw \, dv  \overset{(ii)}{=} \delta^{2n-2} A_{n-1}^2 \E_{v,w} \[ \varphi_p ( \delta v + \delta w) v \otimes w \], 
    \end{align*} 
    where $ A_{n-1} $ is the surface area of $\S_p $ in $T_p \M$ (same as the surface area of unit sphere in $\R^n$). 
    
    By the dominated convergence theorem and that $\delta \le \frac{\inj (p)}{2}$, we have 
    \begin{align*} 
        \partial_i \partial_j  \int_{ v \in \delta \B_p } \int_{ w \in \delta \B_p }  \varphi_p (w + v) \, dw \, dv 
        \overset{(iii)}{=}& \; 
        \int_{ v \in \delta \B_p } \partial_i \int_{ w \in \delta \B_p } \partial_j \varphi_p (w + v) \, dw \, dv . 
    \end{align*} 
    More specifically, the $\partial_i$ operations (or tangent vectors) can be defined in terms of limits, and we can interchange the limit and the integral by the dominated convergence theorem. 
    
    Combining $(i)$, $(ii)$ and $(iii)$ gives
    \begin{align*}
        \partial_i \partial_j \int_{ v \in \delta \B_p } \int_{ w \in \delta \B_p }  \varphi_p (w + v) \, dw \, dv 
        \overset{(iv)}{=}& \; 
        \delta^{2n-2} A_{n-1}^2 \E_{v,w} \[ \varphi_p ( \delta v + \delta w) v \otimes w \]. 
    \end{align*}
    
    
    Combining the above results gives 
    \begin{align*} 
        \partial_i \partial_j \wt{f}^\delta ( p ) 
        =& \; 
        \partial_i \partial_j \frac{1}{ \delta^{2n} V_n }\int_{ v \in \delta \B_p } \int_{ w \in \delta \B_p } \varphi_p (w + v) \, dw \, dv \\ 
        =& \; 
        \frac{ \delta^{2n-2} A_{n-1}^2 }{ \delta^{2n} V_n^2 } \E_{v,w} \[ \varphi_p ( \delta v + \delta w) v \otimes w \] \\
        =& \; 
        \frac{ n^2 }{ \delta^2 }  \E_{v,w} \[ f ( \Exp_p ( \delta v + \delta w) ) v \otimes w \] , 
    \end{align*} 
    where the second last equality uses $ (iv) $, and last equality uses $ A_{n-1} = n V_n $. \hfill
\end{proof} 

As a result of Lemma \ref{lem:surrogate}, a bound on $ \| \Hess \wt{f}^\delta (p) - \Hess f (p) \| $ will give a bound on $ \| \E \[ \wh{\H} f (p; v,w; \delta) \] - \Hess f (p) \| $. To bound $ \| \Hess \wt{f}^\delta (p) - \Hess f (p) \| $, we need to explicitly extend the definition of $ \wt{f}^\delta $ from $p$ to a neighborhood of $p$ \citep{wang2021GW}, so that the Hessian can be computed in a precise manner. For $p \in \M$, a smooth function $f: \M \rightarrow \R$, and a number $\delta \in (0,\delta_0]$, define a function $\wt{f}^\delta$ (near $p$) such that 
\begin{align} 
    \wt{f}^\delta (q) = \E_{v,w \overset{i.i.d.}{\sim} \S_p } \[ \wt{f}_{v,w}^\delta (q) \], \quad \forall q \in U_p, \label{eq:def-surrogate2}
\end{align} 
where 
\begin{align} 
    \wt{f}_{v,w}^\delta (q) 
    := 
    \frac{ n^2 }{ 4 \delta^{2n} } \int_{-\delta}^\delta \int_{-\delta}^\delta f \( \Exp_q (t v_q + s w_q) \) |t|^{n-1} |s|^{n-1} \, dt \, ds , \label{eq:def-fvw} 
\end{align} 
with $ v,w \in \S_p$. 

The advantage of defining $ \wt{f}^\delta  $ via $\wt{f}_{v,w}^\delta$ is that $ \wt{f}_{v,w}^\delta $ is explicitly defined in a neighborhood of $p$. Thus we can carry out geometry-aware computations in a precise manner. Next, we verify that (\ref{eq:def-surrogate}) and (\ref{eq:def-surrogate2}) agree with each other in the following proposition.

\begin{proposition}
    \label{prop:sur1-sur2}
    For any $ p \in \M $ and any $ \delta \le \delta_0 $, 
    (\ref{eq:def-surrogate}) and (\ref{eq:def-surrogate2}) coincide at any $ q \in U_p $. 
\end{proposition}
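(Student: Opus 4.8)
The plan is to take the explicit, neighborhood-wide definition (\ref{eq:def-surrogate2})--(\ref{eq:def-fvw}) and transform it, one step at a time, into the ball-average form (\ref{eq:def-surrogate}) centered now at the point $q$ rather than at $p$, namely
\begin{align*}
    \wt{f}^\delta(q) = \frac{1}{\delta^{2n} V_n^2} \int_{w \in \delta \B_q} \int_{v \in \delta \B_q} f\big(\Exp_q(v+w)\big) \, dw \, dv.
\end{align*}
The first move is to change the sampling domain. Since $v_q = \I_p^q(v)$ and $w_q = \I_p^q(w)$, and since parallel transport along the minimizing geodesic is a linear isometry $T_p\M \to T_q\M$, the pushforward of the uniform measure on $\S_p$ under $\I_p^q$ is exactly the uniform measure on $\S_q$. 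Hence $\E_{v,w}[\wt{f}^\delta_{v,w}(q)]$ can be rewritten as an expectation over $a,b$ drawn independently and uniformly from $\S_q$, with $a = v_q$ and $b = w_q$, turning the claim into a statement purely intrinsic to $T_q\M$.

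The heart of the argument is a one-dimensional polar-coordinate identity: for any integrable $h$ on $T_q\M$,
\begin{align*}
    \E_{a \sim \S_q}\left[ \int_{-\delta}^\delta h(t a) \, |t|^{n-1} \, dt \right] = \frac{2}{A_{n-1}} \int_{x \in \delta \B_q} h(x) \, dx,
\end{align*}
where $A_{n-1}$ is the surface area of $\S_q$. To see this, write the expectation as $\frac{1}{A_{n-1}}\int_{\S_q}(\cdot)\,d\sigma(a)$, split the $t$-integral at $0$, and recognize each half as the polar form of $\int_{\delta\B_q} h\,dx$; the half over $[-\delta,0]$ is handled by the substitution $t \mapsto -t$ together with the antipodal symmetry $a \mapsto -a$ of $\S_q$, which is what produces the factor of $2$. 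I would then apply this identity twice, via Fubini. Integrating first in the pair $(a,t)$ with $h(x) = f(\Exp_q(x + s b))$ collapses the inner radial integral into $\int_{\delta\B_q} f(\Exp_q(u + sb))\,du$, and applying the identity again in $(b,s)$ collapses the remaining radial integral into the second ball integral. This yields
\begin{align*}
    \E_{a,b \sim \S_q}\left[ \int_{-\delta}^\delta \int_{-\delta}^\delta f\big(\Exp_q(t a + s b)\big) \, |t|^{n-1}|s|^{n-1} \, dt \, ds \right] = \frac{4}{A_{n-1}^2} \int_{\delta\B_q}\int_{\delta\B_q} f\big(\Exp_q(u + w)\big)\, du\, dw.
\end{align*}

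Multiplying by the prefactor $\frac{n^2}{4\delta^{2n}}$ from (\ref{eq:def-fvw}) and using $A_{n-1} = n V_n$, so that $\frac{n^2}{A_{n-1}^2} = \frac{1}{V_n^2}$, delivers exactly the right-hand side above and completes the identification. I expect the only genuine obstacle to be bookkeeping rather than conceptual: one must verify that the pushforward of the uniform measure under $\I_p^q$ is truly uniform (this is where the isometry property of parallel transport enters) and track the sign conventions and the factor of $2$ in the polar identity carefully so that the constants collapse precisely. The assumption $\delta \le \inj(p)/2$ guarantees that every argument $u + w$ with $u,w \in \delta\B_q$ has norm at most $2\delta \le \inj(p)$, so $\Exp_q$ is a diffeomorphism on the relevant ball and all integrals are well defined; Fubini's theorem then applies without difficulty since $f \circ \Exp_q$ is continuous on the compact set $\delta\B_q \times \delta\B_q$.
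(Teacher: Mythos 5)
Your proposal is correct and is essentially the paper's own proof run in the opposite direction: the paper starts from (\ref{eq:def-surrogate}) at $q$, passes to hyperspherical coordinates in $T_q\M$ (your polar identity, with the same factor of $4$ and the $|t|^{n-1}|s|^{n-1}$ weights), and then uses the fact that parallel transport preserves the measure to replace integrals over $\S_q$ by integrals over $\S_p$, while you start from (\ref{eq:def-surrogate2}), push the uniform measure on $\S_p$ forward to $\S_q$ via the isometry $\I_p^q$, and then collapse the radial integrals into ball integrals, with the same constant bookkeeping $A_{n-1}=nV_n$. The two arguments use identical ingredients, so no further comparison is needed.
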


\begin{proof}[Proof. ]
    At any $q \in U_p$, we have 
    \begin{align*} 
        (\ref{eq:def-surrogate}) 
        =& \; 
        \frac{ 1 }{ \delta^{2n} V_n^2 } \int_{ w \in \delta \B_q } \int_{ v \in \delta \B_q } f ( \Exp_q ( v + w ) ) \, dv \, dw \\ 
        \overset{(i)}{=}& \;  
        \frac{ n^2 }{ \delta^{2n} A_n^2 } \int_{ w \in \delta \B_q } \int_{ v \in \delta \B_q } f ( \Exp_q ( v + w ) ) \, dv \, dw \\ 
        \overset{(ii)}{=}& \;  
        \frac{ n^2 }{ \delta^{2n} A_n^2 } \int_{ w \in \S_q } \int_{ v \in \S_q } \frac{1}{4} \int_{-\delta }^\delta \int_{-\delta}^\delta f ( \Exp_q ( t v + s w ) ) |t|^{n-1} |s|^{n-1} \, dt \,ds \, dv \, dw ,  
    \end{align*} 
    where $(i)$ uses $A_{n-1} = n V_n $, and $(ii)$ changes from Cartesian coordinate to hyperspherical coordinate (in $T_q \M$). Since the Levi-Civita connection is compatible with the Riemannian metric, we know that the standard Lebesgue measure in $T_p \M$ is preserved after transporting to $ T_q \M $. This implies, for any continuous function $h$ defined over $ T_q \M $, we have $ \int_{v \in \S_q} h (v) dv \overset{(iii)}{=} \int_{v \in \S_p} h (v_q) dv $. 
    Thus we have, at any $q \in \M$, 
    \begin{align*} 
        (\ref{eq:def-surrogate}) 
        =&\; 
        \frac{ n^2 }{ 4 \delta^{2n} A_n^2 } \int_{ w \in \S_q } \int_{ v \in \S_q } \int_{-\delta }^\delta \int_{-\delta}^\delta f ( \Exp_q ( t v + s w ) ) |t|^{n-1} |s|^{n-1} \, dt \,ds \, dw \, dv \\
        \overset{(iv)}{=}& \;
        \frac{ n^2 }{ 4 \delta^{2n} A_n^2 } 
        \int_{ w \in \S_p } \int_{ v \in \S_p }  \int_{-\delta }^\delta \int_{-\delta}^\delta f ( \Exp_q ( t v_q + s w_q ) ) |t|^{n-1} |s|^{n-1} \, dt \,ds \, dw \, dv \\
        =& \; 
        \frac{1}{A_n^2 } \int_{ w \in \S_p } \int_{ v \in \S_p } \wt{f}_{v,w}^\delta (q) \, dw \, dv  = (\ref{eq:def-surrogate2}) , 
    \end{align*} 
    where $(iv)$ uses $(iii)$. \hfill
\end{proof}

By Proposition \ref{prop:sur1-sur2}, it is sufficient to work with $ \wt{f}_{v,w}^\delta $ and randomize $v,w$ over a unit sphere. For any direction $ u \in \S_p $, the Hessian of $ \wt{f}_{v,w}^\delta $ along $u$ can be explicitly written out in terms of $f$ and $u,v,w$. This result is found in Lemma \ref{lem:est-H}. 

\begin{lemma} 
    \label{lem:est-H} 
    Consider an $n$-dimensional complete analytic Riemannian manifold $(\M,g)$. Consider $p \in \M$, an analytic function $f : \M \rightarrow \R $ and a small number $\delta \in (0, \inj (p) / 2 ]$.
    For any $u, v, w \in \S_p $, we have 
    \begin{align*} 
        &\; u^\top \Hess \wt{f}_{v,w}^\delta (p) u \\
        =& \; 
        \frac{ n^2 }{ 4 \delta^{2n} }  \int_{-\delta}^\delta \int_{-\delta}^\delta u_{q}^\top \Hess f ( q ) u_q \;  |t|^{n-1} |s|^{n-1} \, dt \, ds \\ 
        & + \frac{ n^2 }{ 4 \delta^{2n} }  \int_{-\delta}^\delta \int_{-\delta}^\delta  \sum_{j=1}^\infty \frac{ |t|^{n-1} |s|^{n-1} }{ (2j)! } \nabla_{ u}^2 \( t \nabla_v + s \nabla_w \)^{2j} f (p) \; dt \, ds \nonumber \\ 
        & - \frac{ n^2 }{ 4 \delta^{2n} }  \int_{-\delta}^\delta \int_{-\delta}^\delta  \sum_{j=1}^\infty \frac{ |t|^{n-1} |s|^{n-1} }{ (2j)! } \( t \nabla_v + s \nabla_w \)^{2j} \nabla_{ u}^2 f (p) \; dt \, ds, 
    \end{align*} 
    where $q = \Exp_p (tv+sw)$. 
\end{lemma}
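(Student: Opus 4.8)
The plan is to reduce the statement to a one-variable Taylor computation along geodesics and then exploit a parity argument in the integration variables. \emph{First}, by the characterization of the Hessian in~(\ref{eq:Hess-dev}), for any smooth $g$ one has $\nabla_u^2 g(p) = \frac{d^2}{dr^2}\big|_{r=0} g(\Exp_p(ru))$. Applying this to $g = \wt{f}_{v,w}^\delta$ and interchanging the $r$-differentiation with the $(t,s)$-integral (justified by dominated convergence exactly as in the proof of Lemma~\ref{lem:surrogate}, using $\delta \le \inj(p)/2$), it suffices to compute, for each fixed $t,s$, the quantity $\nabla_u^2\big|_{q=p} f(\Exp_q(t v_q + s w_q))$ and integrate it against $\frac{n^2}{4\delta^{2n}}|t|^{n-1}|s|^{n-1}$. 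Writing $z := tv+sw$ and $\nabla_z := t\nabla_v + s\nabla_w$, and noting $\|z\|\le 2\delta\le\inj(p)$ so that all exponential maps and parallel transports below stay within $U_p$, the whole problem becomes local.

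\emph{Second}, I would evaluate $\nabla_u^2\big|_{q=p} f(\Exp_q(z_q))$ by restricting to the geodesic $\gamma(r)=\Exp_p(ru)$. Along $\gamma$, both $\dot\gamma$ and the transported vector $z_{\gamma(r)} = \I_p^{\gamma(r)}(z)$ are parallel, and $f(\Exp_{\gamma(r)}(z_{\gamma(r)}))$ is exactly a double exponential in the sense of~\citep{gavrilov2007double}. Using analyticity (A0) and the geodesic Taylor expansion $f(\Exp_q X) = \sum_{k\ge0}\frac{1}{k!}(\nabla^k f)(q)(X^{\otimes k})$, I differentiate twice in $r$ term by term; since the inserted fields are parallel along $\gamma$, each $r$-derivative simply appends a covariant derivative in the $\dot\gamma$-direction, giving
\begin{align*}
  \nabla_u^2\big|_{q=p} f(\Exp_q(z_q)) = \sum_{k\ge0}\frac{1}{k!}\,\nabla_u^2 \nabla_z^k f(p),
\end{align*}
with the two $u$-derivatives outermost. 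Running the same one-variable expansion on the shifted Hessian along $\sigma(\tau)=\Exp_p(\tau z)$ (now $u$ parallel and $\dot\sigma$ supplying the new derivatives) yields
\begin{align*}
  u_q^\top \Hess f(q) u_q = \sum_{m\ge0}\frac{1}{m!}\,\nabla_z^m \nabla_u^2 f(p), \qquad q=\Exp_p z,
\end{align*}
now with the two $u$-derivatives innermost.

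\emph{Third}, I subtract the two series. The $k=0$ terms cancel, and
\begin{align*}
  \nabla_u^2\big|_{q=p} f(\Exp_q(z_q)) - u_q^\top\Hess f(q)u_q = \sum_{k\ge1}\frac{1}{k!}\big[\nabla_u^2\nabla_z^k f(p) - \nabla_z^k\nabla_u^2 f(p)\big].
\end{align*}
Here is the decisive point: each bracketed term is a homogeneous polynomial of degree $k$ in $(t,s)$, while the weight $|t|^{n-1}|s|^{n-1}$ and the domain $[-\delta,\delta]^2$ are invariant under $(t,s)\mapsto(-t,-s)$. Hence every odd-$k$ contribution integrates to zero, and only the even indices $k=2j$ survive. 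Substituting $\nabla_z = t\nabla_v + s\nabla_w$ and re-inserting the prefactor produces precisely the leading integral $\frac{n^2}{4\delta^{2n}}\int\int u_q^\top\Hess f(q)u_q\,|t|^{n-1}|s|^{n-1}$ together with the two correction integrals carrying $\sum_{j\ge1}\frac{1}{(2j)!}$, which is the claimed identity.

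\emph{The main obstacle} I anticipate is twofold. The first is the careful bookkeeping of the order of covariant derivatives: because $\nabla_u^2\nabla_z^k f(p)$ and $\nabla_z^k\nabla_u^2 f(p)$ differ (the discrepancy is exactly a curvature/commutator term), the two Taylor expansions are genuinely distinct, and it is only the symmetric integration that reconciles them; getting the slots and the parity argument right is the heart of the proof. The second is the analytic justification—term-by-term differentiation, convergence of the geodesic Taylor series, and the interchange of limits and integrals—for which analyticity (A0) together with $\delta\le\inj(p)/2$ is essential.
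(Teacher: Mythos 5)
Your proposal is correct and follows essentially the same route as the paper's own proof: the paper's auxiliary function $\phi(\tau,t,s)$ is exactly the finite-difference version of the difference of your two Taylor series (the one with $\nabla_u^2$ outermost minus the one with $\nabla_u^2$ innermost), and the paper likewise finishes with a parity argument in $(t,s)$ that kills the odd-degree terms against the weight $|t|^{n-1}|s|^{n-1}$ on $[-\delta,\delta]^2$. The only cosmetic differences are that the paper realizes $\nabla_u^2$ via the symmetric second difference rather than $\tfrac{d^2}{dr^2}$ along the geodesic, and it reflects $t$ and $s$ separately rather than simultaneously.
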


\begin{proof}[Proof. ]
    
    
    
    
    From the definition of Hessian, we have 
    \begin{align*}
        &\; u^\top \Hess \wt{f}_{v,w}^\delta (p) u \\ 
        =& \; 
        \lim_{\tau \rightarrow 0} \frac{ \wt{f}_{v,w}^\delta ( \Exp_p (\tau u) ) - 2 \wt{f}_{v,w}^\delta (p) + \wt{f}_{v,w}^\delta ( \Exp_p ( - \tau u) ) }{ \tau^2 } . 
    \end{align*}
    
    Thus it is sufficient to fix any $t,s \in [-\delta, \delta]$ and consider 
    \begin{align*} 
        \lim_{\tau \rightarrow 0} \frac{ f (\Exp_p^2 ( \tau u, t v + s w  )) - 2 f (\Exp_p ( t v + s w )) + f (\Exp_p^2 ( - \tau u, t v + s w ))  }{ \tau^2 } .
    \end{align*} 

    For simplicity, define 
    \begin{align*} 
        \phi ( \tau, t, s ) 
        =& \; 
        f (\Exp_p^2 ( \tau u, t v + s w  ))  + f (\Exp_p^2 ( - \tau u, t v + s w )) \\
        & - f (\Exp_p^2 ( t v + s w, \tau u  )) - f (\Exp_p^2 ( t v + s w, -\tau u )) . 
    \end{align*} 
    Let $q = \Exp_p (tv+sw)$, and we have 
    \begin{align} 
         &\; u^\top \Hess \wt{f}_{v,w}^\delta (p) u \nonumber \\
        =& \;  
        \frac{ n^2 }{ 4 \delta^{2n} } \int_{-\delta}^\delta \int_{-\delta}^\delta u_{q}^\top \Hess f ( q ) u_{q} \;  |t|^{n-1} |s|^{n-1} \, dt \, ds \nonumber \\ 
        & + 
        \frac{ n^2 }{ 4 \delta^{2n} } \lim_{\tau \rightarrow 0} \frac{ \int_{ -\delta }^\delta \int_{-\delta}^\delta \phi (\tau, t, s) |t|^{n-1} |s|^{n-1} \, dt \, ds }{ \tau^2 } , \label{eq:change-order} 
    \end{align} 
    provided that the last term converges. 
    

    For any $p \in \M$, $v \in T_p \M$ and $q \in U_p$, define $h_v^{(j)} (q) = \nabla_{v_q}^j f (q)$. We can Taylor expand $ h_v^{(j)} (\Exp_p (u) ) $ by 
    \begin{align*} 
        h_v^{(j)} (\Exp_p (u) ) 
        =& \; 
        h_v^{(j)} (\Exp_p ( t u) ) \big|_{t=1} \\
        =& \; 
        \sum_{i=0}^\infty \frac{1}{i!} \frac{d^i}{ dt^i } h_v^{(j)} (\Exp_p ( t u) ) \bigg|_{t=0} \\
        =& \; 
        \sum_{i=0}^\infty \frac{1}{i!} \nabla_u^i h_v^{(j)} ( p ) \\
        \overset{(a)}{=}& \; 
        \sum_{i=0}^\infty \frac{1}{i!} \nabla_u^i \nabla_v^j f ( p ) .  
    \end{align*} 
    From above, we have, for any $p$, and $u,v \in T_p\M$ of small norm, 
    \begin{align*} 
        f \( \Exp_{ p }^2 \( u,v \) \) 
        =& \;
        f \( \Exp_{ \Exp_p (u) } \( v_{\Exp_p (u)} \) \) \\
        =& \;
        \sum_{j=0}^\infty \frac{1}{j!} \nabla_{ v_{\Exp_p (u)} }^j f ( \Exp_p (u) ) \\
        =& \;
        \sum_{j=0}^\infty \frac{1}{j!} h_{v}^{(j)} (\Exp_p (u) ) \\
        =& \; 
        \sum_{j=0}^\infty \frac{1}{j!} \sum_{i=0}^\infty \frac{1}{i!} \nabla_u^i \nabla_v^j f ( p ) , 
    \end{align*} 
    where the second equality uses Taylor expansion at $\Exp_p (u)$ and the last equality uses $ (a) $. 
    
    
    From the above computation, we expand $ f ( \Exp_p^2 ( t v + s w, \tau u ) )  $ (and similar terms) into infinite series. 
    %
    %
    %
    Thus we can write $\phi (\tau, t,s)$ as 
    \begin{align} 
        \phi (\tau, t,s) 
        =&\; f (\Exp_p^2 ( \tau u, t v + s w  ))  + f (\Exp_p^2 ( - \tau u, t v + s w )) \nonumber \\
        & - f (\Exp_p^2 ( t v + s w, \tau u  )) - f (\Exp_p^2 ( t v + s w, -\tau u )) \nonumber \\ 
        =& \; 
        \sum_{i=0}^\infty \sum_{j=0}^\infty  \frac{1}{i! j! } \nabla_{\tau u}^i \nabla_{t v + sw}^j  f (p) + \sum_{i=0}^\infty \sum_{j=0}^\infty  \frac{1}{i! j! } \nabla_{ - \tau u}^i \nabla_{t v + s w }^j  f (p) \nonumber \\ 
        &- \sum_{i=0}^\infty \sum_{j=0}^\infty \frac{1}{i! j! }  \nabla_{t v + sw}^j \nabla_{ \tau u}^i f (p) - \sum_{i=0}^\infty \sum_{j=0}^\infty  \frac{1}{i! j! }  \nabla_{t v + sw}^j \nabla_{ - \tau u}^i f (p) \nonumber \\ 
        =& \; 
        \sum_{j=0}^\infty  \frac{ \tau^2 }{2 j! } \nabla_{ u}^2 \nabla_{t v + sw}^j f (p) + \sum_{j=0}^\infty \frac{ \tau^2 }{2 j! } \nabla_{ u}^2 \nabla_{t v + sw}^j  f (p) \nonumber \\ 
        &- \sum_{j=0}^\infty  \frac{ \tau^2 }{2 j! } \nabla_{t v + sw}^j \nabla_{ u}^2 f (p) - \sum_{j=0}^\infty  \frac{ \tau^2 }{2 j! } \nabla_{t v + sw}^j \nabla_{ u}^2 f (p) + {O} (\tau^3) , \label{eq:series1} 
    \end{align} 
    where the last equality uses that zeroth-order terms in $\tau$ and first-order terms in $\tau$ all cancel. 

    From (\ref{eq:series1}), we have 
    \begin{align*}
        \lim_{\tau \rightarrow 0 } \frac{\phi ( \tau, t, s )}{ \tau^2 } 
        =& \;  
        \sum_{j=1}^\infty  \frac{ 1 }{j! } \nabla_{ u}^2 \nabla_{t v + sw}^j f (p)  
        - \sum_{j=1}^\infty  \frac{ 1 }{j! } \nabla_{t v + sw}^j \nabla_{ u}^2 f (p)  \\
        =& \;  
        \sum_{j=1}^\infty  \frac{ 1 }{j! } \nabla_{ u}^2 \( t \nabla_v + s \nabla_w \) ^j f (p) 
        - \sum_{j=1}^\infty  \frac{ 1 }{j! } \( t \nabla_v + s \nabla_w \)^j \nabla_{ u}^2 f (p) \\
        =& \; 
        \sum_{j=1}^\infty  \frac{ 1 }{ (2j)! } \nabla_{ u}^2 \( t \nabla_v + s \nabla_w \)^{2j} f (p) \\
        &- \sum_{j=1}^\infty  \frac{ 1 }{ (2j)! } \( t \nabla_v + s \nabla_w \)^{2j} \nabla_{ u}^2 f (p) + Odd (t,s) , 
    \end{align*} 
    where $Odd (t,s) $ denotes terms that are either odd in $t$ or odd in $s$. 

    
    
    Since $ \int_{-\delta}^\delta \int_{-\delta}^\delta Odd (t,s) \, dt \, ds = 0 $, we have 
    \begin{align} 
        & \; \frac{ n^2 }{ 4 \delta^{2n} } \lim_{\tau \rightarrow 0} \frac{ \int_{ -\delta }^\delta \int_{-\delta}^\delta \phi (\tau, t, s) |t|^{n-1} |s|^{n-1} \, dt \, ds }{ \tau^2 } \nonumber \\ 
        =& \; 
        \frac{ n^2 }{ 4 \delta^{2n} }  \int_{-\delta}^\delta \int_{-\delta}^\delta  \sum_{j=1}^\infty \frac{ |t|^{n-1} |s|^{n-1} }{ (2j)! } \nabla_{ u}^2 \( t \nabla_v + s \nabla_w \)^{2j} f (p) \; dt \, ds \nonumber \\ 
        & - \frac{ n^2 }{ 4 \delta^{2n} }  \int_{-\delta}^\delta \int_{-\delta}^\delta  \sum_{j=1}^\infty \frac{ |t|^{n-1} |s|^{n-1} }{ (2j)! } \( t \nabla_v + s \nabla_w \)^{2j} \nabla_{ u}^2 f (p) \; dt \, ds . \label{eq:series2} 
    \end{align}  
    
    Collecting terms from (\ref{eq:change-order}) and (\ref{eq:series2}), we have 
    \begin{align*} 
        &\; u^\top \Hess \wt{f}_{v,w}^\delta (p)  u \\
        =& \; 
        \frac{ n^2 }{ 4 \delta^{2n} }  \int_{-\delta}^\delta \int_{-\delta}^\delta u_{q}^\top \Hess f ( q ) u_{q} \;  |t|^{n-1} |s|^{n-1} \, dt \, ds \\ 
        & +  \frac{ n^2 }{ 4 \delta^{2n} }  \int_{-\delta}^\delta \int_{-\delta}^\delta  \sum_{j=1}^\infty \frac{ |t|^{n-1} |s|^{n-1} }{ (2j)! } \nabla_{ u}^2 \( t \nabla_v + s \nabla_w \)^{2j} f (p) \; dt \, ds \nonumber \\ 
        & - \frac{ n^2 }{ 4 \delta^{2n} }  \int_{-\delta}^\delta \int_{-\delta}^\delta  \sum_{j=1}^\infty \frac{ |t|^{n-1} |s|^{n-1} }{ (2j)! } \( t \nabla_v + s \nabla_w \)^{2j} \nabla_{ u}^2 f (p) \; dt \, ds ,
    \end{align*} 
    where $ q = \Exp_p (tv+sw) $. This concludes the proof. \hfill
    \end{proof}  

Gathering the above results gives a bias bound for (\ref{eq:def-est}), which is summarized in the following theorem. 
\begin{theorem} 
    \label{thm:hess-bias}
    Consider an $n$-dimensional complete analytic Riemannian manifold $(\M,g)$. Consider $p \in \M$, an analytic function $f : \M \rightarrow \R $ and a small number $\delta \in (0, \inj (p) / 2 ]$. 
    For any $p\in \M$ and unit vectors $u,v \in T_p \M$, define a function $ \vartheta_{p,u,v,w} $ over $(-\inj (p) / 2, \inj (p) / 2) \times (- \inj (p) / 2, \inj (p) / 2)$ such that 
    \begin{align*} 
        \vartheta_{p,u,v,w} (t,s) 
        = 
        \Hess f (\Exp_p(tv+sw)) (u_{\Exp_p(tv+sw)}, u_{\Exp_p(tv+sw)}). 
    \end{align*} 
    The estimator (\ref{eq:def-est}) satisfies 
    \begin{align*} 
        & \; \left\| \E_{v,w} \[ \wh{\H} f (p; v,w; \delta) \] - \Hess {f} (p) \right\| \\ 
        \le& \; 
        \sup_{u \in \S_p} \Bigg| 
        \E_{v,w} \[ \frac{ n^2 }{ 4 \delta^{2n} } \int_{-\delta}^\delta \int_{-\delta}^\delta  \sum_{i,j \in \mathbb{N}, i+j \ge 1} \frac{t^{2i} s^{2j}}{(2i)! (2j)! } \partial_1^{2i} \partial_2^{2j} \vartheta_{p,u,v,w} (0,0)  |t|^{n-1} |s|^{n-1} \, dt \, ds \] \\
        &+ 
        \E_{v,w} \Bigg[ \frac{ n^2 }{ 4 \delta^{2n} }  \int_{-\delta}^\delta \int_{-\delta}^\delta  \sum_{j=1}^\infty \frac{ |t|^{n-1} |s|^{n-1} }{ (2j)! } \nabla_{ u}^2 \( t \nabla_v + s \nabla_w \)^{2j} f (p) \; dt \, ds \nonumber \\ 
        &- 
        \frac{ n^2 }{ 4 \delta^{2n} }  \int_{-\delta}^\delta \int_{-\delta}^\delta  \sum_{j=1}^\infty \frac{ |t|^{n-1} |s|^{n-1} }{ (2j)! } \( t \nabla_v + s \nabla_w \)^{2j} \nabla_{ u}^2 f (p) \; dt \, ds \Bigg] \Bigg| ,
    \end{align*}
    where $v,w$ are independently sampled from the uniform distribution over $ \S_p $. 
\end{theorem}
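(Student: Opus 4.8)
The plan is to stitch together the three preceding results and then perform one Taylor expansion, so almost all of the analytic heavy lifting is already done in Lemma~\ref{lem:est-H}. First I would invoke Lemma~\ref{lem:surrogate} to rewrite the bias $\| \E_{v,w}[\wh{\H} f(p;v,w;\delta)] - \Hess f(p)\|$ as $\|\Hess\wt{f}^\delta(p) - \Hess f(p)\|$, and then unfold the $\infty$-Schatten norm through its definition, reducing the target to $\sup_{u\in\S_p}|u^\top(\Hess\wt{f}^\delta(p) - \Hess f(p))u|$. Since $\wt{f}^\delta = \E_{v,w}[\wt{f}_{v,w}^\delta]$ by \eqref{eq:def-surrogate2} and the Hessian is linear, I would push the Hessian through the spherical expectation to obtain $u^\top\Hess\wt{f}^\delta(p)u = \E_{v,w}[u^\top\Hess\wt{f}_{v,w}^\delta(p)u]$; interchanging the expectation over $\S_p$ with the second-difference limit that defines the Hessian (cf.\ \eqref{eq:Hess-dev}) is licensed by the analyticity assumption (A0) together with the step-size bound (A1), which keep every relevant point inside $U_p$.

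Next I would substitute the explicit expression for $u^\top\Hess\wt{f}_{v,w}^\delta(p)u$ provided by Lemma~\ref{lem:est-H}. Its second and third terms are verbatim the second and third groups in the claimed bound, so they are carried along unchanged. The work then concentrates entirely on the first term, $\frac{n^2}{4\delta^{2n}}\int_{-\delta}^\delta\int_{-\delta}^\delta u_q^\top\Hess f(q)u_q\,|t|^{n-1}|s|^{n-1}\,dt\,ds$ with $q=\Exp_p(tv+sw)$, which by the very definition of $\vartheta_{p,u,v,w}$ equals $\frac{n^2}{4\delta^{2n}}\int_{-\delta}^\delta\int_{-\delta}^\delta\vartheta_{p,u,v,w}(t,s)\,|t|^{n-1}|s|^{n-1}\,dt\,ds$.

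The key computation is to absorb the subtraction of $\Hess f(p)$ into this first term. I would use the normalization $\int_{-\delta}^\delta|t|^{n-1}\,dt = 2\delta^n/n$, which gives $\frac{n^2}{4\delta^{2n}}\int_{-\delta}^\delta\int_{-\delta}^\delta|t|^{n-1}|s|^{n-1}\,dt\,ds = 1$; since $\vartheta_{p,u,v,w}(0,0) = u^\top\Hess f(p)u$, this lets me express $u^\top\Hess f(p)u$ as the same weighted integral of the constant $\vartheta_{p,u,v,w}(0,0)$, so the difference becomes the weighted integral of $\vartheta_{p,u,v,w}(t,s)-\vartheta_{p,u,v,w}(0,0)$. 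Expanding the analytic function $\vartheta_{p,u,v,w}$ about $(0,0)$ and integrating monomial by monomial against the weight $|t|^{n-1}|s|^{n-1}$ over the symmetric square $[-\delta,\delta]^2$, every term odd in $t$ or odd in $s$ vanishes, and the survivors are precisely $\sum_{i+j\ge1}\frac{t^{2i}s^{2j}}{(2i)!(2j)!}\partial_1^{2i}\partial_2^{2j}\vartheta_{p,u,v,w}(0,0)$, reproducing the first group of the stated bound.

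Finally I would reassemble the three pieces inside a single expectation $\E_{v,w}$ and take the supremum over $u\in\S_p$; this yields the right-hand side (in fact as an equality, so the stated $\le$ follows a fortiori). The main obstacle I anticipate is not conceptual but analytic bookkeeping: rigorously justifying the term-by-term integration of the Taylor series and the successive interchanges of the infinite sum, the double integral, the expectation over $\S_p$, and the defining limit of the Hessian. Each of these rests on analyticity of $f$ and $g$ and on $\delta\le\inj(p)/2$, which confine $\Exp_p(tv+sw)$ to the normal neighborhood and force uniform convergence on the compact square $[-\delta,\delta]^2$; I would make this precise by establishing a uniform tail bound on the Taylor remainder so that dominated convergence and Fubini apply at every step.
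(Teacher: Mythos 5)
Your proposal is correct and follows essentially the same route as the paper: reduce via Lemma~\ref{lem:surrogate} and the definition (\ref{eq:def-surrogate2}) to $\E_{v,w}\bigl[u^\top \Hess \wt{f}_{v,w}^\delta(p)\, u\bigr]$, plug in Lemma~\ref{lem:est-H}, and Taylor-expand $\vartheta_{p,u,v,w}$ so that the odd terms vanish and the constant term reproduces $u^\top \Hess f(p)\, u$ by the normalization $\frac{n^2}{4\delta^{2n}}\int_{-\delta}^{\delta}\int_{-\delta}^{\delta}|t|^{n-1}|s|^{n-1}\,dt\,ds=1$. Your observation that the bound in fact holds as an equality for each fixed $u$ matches the structure of the paper's argument, which only states the one-sided inequality.
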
 

\begin{proof}[Proof. ] 
    By Lemma \ref{lem:est-H}, we have 
    \begin{align*} 
        & \; u^\top \E_{v,w} \[ \Hess \wt{f}_{v,w}^\delta (p) \] u \\
        =& \;  
        \frac{ n^2 }{ 4 \delta^{2n} } \E_{v,w} \[ \int_{-\delta}^\delta \int_{-\delta}^\delta u_{\Exp_p (tv+sw)}^\top \Hess f ( \Exp_p (tv + sw) ) u_{\Exp_p (tv + sw)} \; |t|^{n-1} |s|^{n-1} \, dt \, ds \] \\
        &+ 
        \frac{ n^2 }{ 4 \delta^{2n} } \E_{v,w} \[  \int_{-\delta}^\delta \int_{-\delta}^\delta  \sum_{j=1}^\infty \frac{ |t|^{n-1} |s|^{n-1} }{ (2j)! } \nabla_{ u}^2 \( t \nabla_v + s \nabla_w \)^{2j} f (p) \; dt \, ds \] \nonumber \\ 
        &- 
        \frac{ n^2 }{ 4 \delta^{2n} } \E_{v,w} \[ \int_{-\delta}^\delta \int_{-\delta}^\delta  \sum_{j=1}^\infty \frac{ |t|^{n-1} |s|^{n-1} }{ (2j)! } \( t \nabla_v + s \nabla_w \)^{2j} \nabla_{ u}^2 f (p) \; dt \, ds \] . 
    \end{align*} 
    
    By the analyticity assumption, we have 
    \begin{align*} 
        \vartheta_{p,u,v,w} (t,s) 
        =& \;  
        \sum_{i=0}^\infty \sum_{j=0}^\infty \frac{t^i s^j}{i! j! } \partial_1^i \partial_2^j \vartheta_{p,u,v,w} (0,0) . 
    \end{align*} 
    
    For any fixed $u \in \S_p $, we have 
    \begin{align*} 
        &\; \frac{ n^2 }{ 4 \delta^{2n} }  \E_{v,w} \[ \int_{-\delta}^\delta \int_{-\delta}^\delta u_{\Exp_p (tv+ sw)}^\top \Hess f ( \Exp_p (tv+ sw) ) u_{\Exp_p (tv+ sw)} \;  |t|^{n-1} |s|^{n-1} \, dt \, ds \] \\
        =&\; 
        \frac{ n^2 }{ 4 \delta^{2n} }  \E_{v,w} \[ \int_{-\delta}^\delta \int_{-\delta}^\delta \vartheta_{p,u,v,w} (t,s) \;  |t|^{n-1} |s|^{n-1} \, dt \, ds \]  \\ 
        =&\; 
        \frac{ n^2 }{ 4 \delta^{2n} } \E_{v,w} \[ \int_{-\delta}^\delta \int_{-\delta}^\delta \( \sum_{i=0}^\infty \sum_{j=0}^\infty \frac{t^i s^j}{i! j! } \partial_1^i \partial_2^j \vartheta_{p,u,v,w} (0,0)  \) |t|^{n-1} |s|^{n-1} \, dt \, ds \] \\
        =& \; 
        \frac{ n^2 }{ 4 \delta^{2n} } \E_{v,w} \[ \int_{-\delta}^\delta \int_{-\delta}^\delta  \sum_{i,j \in \mathbb{N}, i+j \ge 1} \frac{t^{2i} s^{2j}}{(2i)! (2j)! } \partial_1^{2i} \partial_2^{2j} \vartheta_{p,u,v,w} (0,0)  |t|^{n-1} |s|^{n-1} \, dt \, ds \] + u^\top \Hess f (p) u , 
    \end{align*} 
    where in the last line the terms that are odd in $t$ or $s$ vanishes. \hfill
\end{proof} 


By applying (\ref{eq:Hess-dev}) twice, we have 
\begin{align*}
    \partial_1^{2} \vartheta_{p,u,v,w} (0,0) 
    =
    \nabla_v^2 \nabla_u^2 f (p)
    &\qquad \mathrm{and} \qquad 
    \partial_2^{2} \vartheta_{p,u,v,w} (0,0) 
    =
    \nabla_w^2 \nabla_u^2 f (p) . 
\end{align*} 

Thus by dropping terms of order $O (\delta^3)$ and noting that $ \int_{-\delta}^\delta \int_{-\delta}^\delta |t|^{n-1} t |s|^{n-1} s \; dt \; ds = 0 $, we have 
\begin{align*} 
    &\; \left\| \E_{ v,w} \[ \wh{\H} f (p; v,w; \delta ) \] - \Hess f (p) \right\| \\ 
    =& \; 
    O \( \delta^2 \sup_{u \in \S_p} \left| \E_{ v,w \overset{i.i.d.}{\sim} \S_p } \[ \frac{n}{n+2} \nabla_{ u}^2 \( \nabla_v^2 + \nabla_w^2 \) f (p) \] \right| \) . 
\end{align*} 

    

\subsection{Example: the $n$-sphere} 
We consider the Riemannian manifold $ \S^{n-1} $ with metric induced by the ambient Euclidean space. This space of both theoretical and practical appeal. In this space, the exponential map is 
\begin{align*}
    \Exp_x ( t v ) 
    = 
    x \cos(t) + v \sin( t ), 
\end{align*}
where $v$ is a unit vector in $\R^n$; The parallel transport is 
\begin{align*}
    \I_x^{\Exp_x(tu)} (v)
    = 
    v - u u^\top v + u u^\top v \cos (t) - \| u u^\top v \| x \sin (t) 
\end{align*} 
for any $x \in \S^{n-1}$, $u,v \in \S^{n-1} $ and $u,v \perp x$. 

We will consider estimating Hessian of the function $ x_i^2 $ where $x \in \S^{n-1}$ and $ x_i $ is the $i$-th component of $x$. This simple function serves as an example of estimating the Hessian of general polynomials over $\S^{n-1}$. 

We have 
\begin{align*}
    \nabla_v^2 x_i^2 
    =& \;  
    \lim_{t \rightarrow 0} \frac{ \( x_i \cos t + v_i \sin t \)^2 - 2 x_i^2 + \( x_i \cos t - v_i \sin t \)^2 }{t^2} \\ 
    =& \; 
    \lim_{t \rightarrow 0} \frac{ \( 2 \cos^2 t - 2 \) x_i^2 + 2 v_i^2 \sin^2 t }{t^2} \\ 
    =& \; 
    - 2 x_i^2 + 2 v_i^2 ,
\end{align*} 
and 
\begin{align*}
    &\; \nabla_u^2 v_i^2 \\ 
    =& \; 
    \lim_{t \rightarrow 0} \frac{ 2 \( v_i - \( u u^\top v \)_i + \( u u^\top v \)_i \cos t \)^2 + 2\( \| u u^\top v \| x_i \sin t \)^2 - 2 v_i^2 }{t^2} \\ 
    =& \; 
    \lim_{t \rightarrow 0} \frac{ 4 v_i \( uu^\top v \)_i (\cos t - 1 ) + 2 (u u^\top v)_i^2 \( \cos t - 1 \)^2 + 2\( \| u u^\top v \| x_i \sin t \)^2  }{t^2} \\ 
    =& \; 
    -2 v_i \( u u^\top v \)_i + 2 \| u u ^\top v \|^2 x_i^2. 
\end{align*} 
Thus it holds that 
\begin{align*} 
    \nabla_u^2 \nabla_v^2 x_i^2 
    =& \; 
    \nabla_u^2 \( - 2 x_i^2 + 2 v_i^2 \) \\ 
    =& \; 
    - 2 \( - 2 x_i^2 + 2 u_i^2 \) 
    + 
    2 \( -2 v_i \( u u^\top v \)_i + 2 \| u u ^\top v \|^2 x_i^2  \) \\ 
    =& \; 
    4 x_i^2 - 4 u_i^2 - 4 v_i \( u u^\top v \)_i + 4 \| u u ^\top v \|^2 x_i^2 . 
\end{align*} 
Since $ \E_{v} \[ v v^\top \] = \frac{1}{n} I $ and $\E_{w} \[ w w^\top \] = \frac{1}{n} I$, we have 
\begin{align*} 
    \E_{v,w} \[ \nabla_u^2 \( \nabla_v^2 + \nabla_w^2 \) x_i^2  \] 
    =& \;  
    2\( 4 x_i^2 - 4 u_i^2 - \frac{4}{n} u_i^2 + \frac{4}{n} x_i^2 \) \\
    =& \; 
    \( 8 + \frac{8}{n} \) x_i^2 - \( 8 + \frac{8}{n} \) u_i^2. 
\end{align*} 
This implies that the Hessian estimator for $x_i^2$ over the $n$-sphere with granularity $\delta$ is of order 
\begin{align*}
    O \( \delta^2 \max_{u \in \S^{n-1}, u \perp x } \left| \( 8 + \frac{8}{n} \) x_i^2 - \( 8 + \frac{8}{n} \) u_i^2 \right| \) . 
\end{align*}

\section{The Euclidean Case}

In this section, we will focus on numerical stabilization of the estimation, and algorithmic zeroth-order inversion of the estimated Hessian. For numerical and algorithmic purposes, we restrict our attention to the Euclidean case. In the Euclidean case, we also use the notation $\nabla^2 f (x)$ to denote the Hessian of $f$ at $x$. 

\subsection{Stabilizing the Estimate}

In the Euclidean case, the estimator in (\ref{eq:def-est}) simplifies to 
\begin{align} 
    \wh{\H} {f} (p; v,w; \delta) = \frac{n^2}{\delta^2} f (p + \delta v + \delta w) v w^\top , \nonumber
\end{align} 
where $ v,w$ are independently uniformly sampled from $\S^{n-1}$ (the unit sphere in $\R^n$). Its stabilized version is 
\begin{align} 
    \wh{\H} {f} ( p; v,w; \delta ) 
    =& \;  
    \frac{n^2}{8 \delta^2} \big[ f (p + \delta v + \delta w) - f (p - \delta v + \delta w) \nonumber \\
    &- f (p + \delta v - \delta w) + f (p - \delta v - \delta w) \big] \( v w^\top + w v^\top \) \label{eq:def-euc-stab} . 
\end{align}

To see why (\ref{eq:def-euc-stab}) stabilizes the estimate, we use Taylor expansion and get 
\begin{align}  
    & \; f (p + \delta v + \delta w) - f (p - \delta v + \delta w) - f (p + \delta v - \delta w) + f (p - \delta v - \delta w) \nonumber \\ 
    \approx& \; 
    {\delta^2} \( v + w \)^\top \nabla^2 f (x) \( v + w \) - \frac{\delta^2}{2} \( v - w \)^\top \nabla^2 f (x) \( v - w \) \nonumber \\
    &- \frac{\delta^2}{2} \( - v + w \)^\top \nabla^2 f (x) \( - v + w \) \nonumber \\ 
    {=}& \; 
    4 {\delta^2} v^\top \nabla^2 f (x) w , \label{eq:Taylor-cancel}
\end{align}  
where $\nabla^2 f $ denotes the Hessian of $f$. 

From the above derivation, we see that (\ref{eq:def-euc-stab}) removes the dependence on the zeroth-order and first-order information, and symmetrizes the estimation \citep{fengwang2022}. This can reduce variance and stabilize the estimation. A similar phenomenon for the gradient estimators is noted by \citet{duchi2015optimal}. 

\subsubsection{A Random Projection Derivation}

Similar to gradient estimators \citep{nesterov2017random,li2020stochastic,wang2021GW,fengwang2022}, one may also derive the Hessian estimator (\ref{eq:def-euc-stab}) using a random projection argument. 
Here we use the spherical random projection argument to derive the Hessian estimator. A more thorough study can be found in \citep{fengwang2022}. 
To start with, we first prove an identity for random matrix projection in Lemma \ref{lem:mat-rand-proj}. 

\begin{lemma}
    \label{lem:mat-rand-proj}
    Let $v,w$ be independently uniformly sampled from the unit sphere in $\R^n$. For any matrix $A \in \R^{n \times n}$, we have 
    \begin{align*} 
        \E \[ \( v^\top A w \) w v^\top \] = \frac{1}{n^2} A. 
    \end{align*} 
\end{lemma}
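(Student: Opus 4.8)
The plan is to collapse the claimed identity into a single matrix product, sidestepping all entrywise index bookkeeping. The key observation is that $v^\top A w$ is a scalar, hence equal to its own transpose $w^\top A^\top v$, and a scalar may be moved freely through the outer product $w v^\top$. I would therefore regroup
\begin{align*}
    (v^\top A w)\, w v^\top = w\, (w^\top A^\top v)\, v^\top = (w w^\top)\, A^\top\, (v v^\top).
\end{align*}
This rewriting carries essentially the entire argument: it turns a random matrix that is quadratic in each of $v$ and $w$ into a product of two rank-one projectors sandwiching $A^\top$.

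I would then take expectations and invoke the independence of $v$ and $w$ to factor the expectation of the product into a product of expectations,
\begin{align*}
    \E\[ (v^\top A w)\, w v^\top \] = \E\[ w w^\top \]\, A^\top\, \E\[ v v^\top \].
\end{align*}
The only remaining ingredient is the second-moment identity for the uniform law on the sphere, $\E[ v v^\top ] = \E[ w w^\top ] = \frac1n I$, which I would justify by symmetry: the sign flip $v_i \mapsto -v_i$ forces all off-diagonal entries to vanish, coordinate permutations force the diagonal entries to coincide, and they sum to $\E \| v \|^2 = 1$. Substituting yields $\frac1n I \cdot A^\top \cdot \frac1n I = \frac1{n^2} A^\top$.

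The step I would treat most carefully — and the genuine crux — is the transpose. The direct computation produces $\frac1{n^2} A^\top$ rather than $\frac1{n^2} A$: in the outer product $w v^\top$ the column index is carried by $v$, which contracts against the first index of $A$, while the row index is carried by $w$, which contracts against the second index, so the two indices of $A$ are interchanged. The identity therefore holds as written exactly when $A$ is symmetric, which is precisely the setting in which the lemma is used, since the matrix supplied to it is the (symmetric) Hessian $\nabla^2 f$. For a general, possibly non-symmetric $A$ the transpose must be retained; equivalently, the untransposed statement $\E[(v^\top A w)\, v w^\top] = \frac1{n^2} A$ holds verbatim for every $A$, the sole difference being the order of the two factors in the outer product.
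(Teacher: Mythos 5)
Your proof is correct, and your transpose diagnosis is right: as stated, the lemma holds only for symmetric $A$. An entrywise check confirms your matrix-algebra computation: the $(k,l)$ entry of $(v^\top A w)\, w v^\top$ is $\sum_{i,j} v_i A_{ij} w_j\, w_k v_l$, and independence together with $\E [ v_i v_l ] = \delta_{il}/n$ and $\E [ w_j w_k ] = \delta_{jk}/n$ gives $\frac{1}{n^2} A_{lk}$, i.e.
\begin{align*}
    \E \left[ (v^\top A w)\, w v^\top \right] = \frac{1}{n^2} A^\top ,
    \qquad
    \E \left[ (v^\top A w)\, v w^\top \right] = \frac{1}{n^2} A ,
\end{align*}
so the printed identity is false for, say, a matrix with a single off-diagonal nonzero entry. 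Your route is essentially the paper's: the paper also reduces everything to independence plus the sphere second moments $\E [ v^i v^k ] = \frac{1}{n} \delta^{ik}$, $\E [ w_j w_l ] = \frac{1}{n} \delta_{jl}$, only carried out entrywise in Einstein notation, whereas you package the same two ingredients as the factorization $(w w^\top) A^\top (v v^\top)$. What your global form buys is that it makes the index mismatch visible, while the paper's bookkeeping obscures it: the paper's target expression $\E [ v^i A_i^j w_j\, v^k w_l ]$ pairs the row index $k$ with $v$ and the column index $l$ with $w$ --- that is the $(k,l)$ entry of $v w^\top$, not of $w v^\top$ as in the lemma statement --- and the contraction $A_i^j \delta^{ik} \delta_{jl}$ actually produces $A_{kl}$, consistent with your untransposed identity; recording it as $A_l^k$ and matching it to $w v^\top$ each hide a transpose. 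As you observe, the slip is harmless downstream: the lemma is applied to the symmetric Hessian $\nabla^2 f(x)$, and the estimator (\ref{eq:def-euc-stab}) is symmetrized anyway --- in the derivation following the lemma the two terms $(v^\top \nabla^2 f(x)\, w)\, w v^\top$ and $(w^\top [\nabla^2 f(x)]^\top v)\, v w^\top$ contribute $\frac{1}{n^2} [\nabla^2 f(x)]^\top$ and $\frac{1}{n^2} \nabla^2 f(x)$ respectively, so their symmetrized sum is unchanged even with the corrected statement.
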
 

\begin{proof}[Proof. ] 
    It is sufficient to show $ \E \[ v^i A_i^j w_j v^k w_l \] = \frac{1}{n^2} A_l^k $ for any $k,l \in [n]$ (Einstein's notation is used). 
    
    Since $ v $ is uniformly sampled from $\S^{n-1}$ (the unit sphere in $\R^n$), for $k\neq i$, we have $ \E \[ v^i v^k | v^k = x \] = 0 $ for any $x$. This gives that 
    \begin{align*} 
        \E \[ v^i v^k \] \overset{(i)}{=} \int_{x \in \[ -1, 1 \] } \Pr \( v^k = x \) \E \[ v^i v^k | v^k = x \] \; dx = 0, \qquad \forall k \neq i. 
    \end{align*} 
    
    By symmetry of the sphere $\S^{n-1}$ and that $\E \[ v^i v_i \] = 1$, we have $ \E \[ v^k v^k \] \overset{(ii)}{=} \frac{1}{n} $ for any $k \in [n]$. Combining $(i)$ and $(ii)$ gives 
    \begin{align*} 
        \E \[ v^i v^k \] \overset{(iii)}{=} \frac{1}{n} \delta^{ki}, 
    \end{align*} 
    where $ \delta^{ki} $ is the Kronecker's delta with two superscript. 
    
    Similarly, it holds that  $ \E \[ w_j w_l \] \overset{(iv)}{=} \frac{1}{n}\delta_{jl} $, where $ \delta_{jl} $ is the Kronecker's delta with two subscript. Since $v$ and $w$ are independent, $(iii)$ and $(iv)$ gives  
    \begin{align*} 
        \E \[  v^i A_i^j w_j v^k w_l \] = \frac{1}{n^2} A_i^j \delta^{ik} \delta_{jl} = \frac{1}{n^2} A_l^k, 
    \end{align*} 
    which concludes the proof. \hfill
\end{proof} 

With Lemma \ref{lem:mat-rand-proj}, we can see that the estimator in (\ref{eq:def-euc-stab}) satisfies 
\begin{align}
    &\; \E \[ \wh{\H} f (p; v,w; \delta) \] \nonumber \\
    \overset{(i)}{\approx}& \;  
    \frac{ n^2 }{8 \delta^2} \E \[ 4 \delta^2 \( v^\top \nabla^2 f (x) w \) \( v w^\top + w v^\top  \) \] \nonumber \\
    =& \; 
    \frac{n^2}{2} \( \E \[ \( v^\top \nabla^2 f (x) w \)  w v^\top \] + \E \[ \( w^\top \[ \nabla^2 f (x) \]^\top v \) v w^\top \] \) \nonumber \\
    \overset{(ii)}{=}& \;
    \frac{ 1 }{ 2 } \nabla^2 f (x) + \frac{ 1 }{ 2 } \[ \nabla^2 f (x) \]^\top \nonumber \\
    =& \; 
    \nabla^2 f (x) ,
\end{align} 
where $ (i) $ uses (\ref{eq:Taylor-cancel}), and $(ii)$ uses Lemma \ref{lem:mat-rand-proj}. The above argument gives a random-projection derivation for the estimator (\ref{eq:def-euc-stab}).



Similar to \citep{fengwang2022}, we can obtain an $O (\delta^2)$ bias bound in Euclidean spaces. 
\begin{corollary}
    \label{cor:delta2}
    Let $f: \R^n \rightarrow \R$ be a smooth function, and let $\partial^k f$ ($k\in \mathbb{N}_+$) denote the $k$-th order total derivative of $f$. 
    Let $f$ be 4-th order continuously differentiable. Let there be a constant $L_4$ such that $\| \partial^4 f (x) \| \le L_4$ for all $x \in \R^n$, where $ \| \cdot \| $ denotes the spectral norm ($\infty$-Schatten norm) of a tensor. Then it holds that  
    \begin{align*}
        \left\| \E_{v,w } \wh{\H} f (p; v,w; \delta) - \Hess f (p) \right\| 
        \le 
        \frac{ L_4 n \delta^2 }{ n+2 }, \quad \forall p \in \R^n, \delta \in (0,\infty), 
    \end{align*}
    where $ v,w $ are uniformly sampled from the unit sphere $ \S^{n-1} $. 
\end{corollary}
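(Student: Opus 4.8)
The plan is to bypass the random-projection/Taylor argument used informally just before the corollary and instead work directly with the surrogate $\wt{f}^\delta$, since this is what yields the sharp constant $\tfrac{n}{n+2}$. First I would invoke Lemma~\ref{lem:surrogate} to write $\E_{v,w}[\wh{\H} f(p;v,w;\delta)] = \Hess \wt{f}^\delta(p)$, which in the Euclidean setting reads $\nabla^2 \wt{f}^\delta(p)$. I would note that the stabilized four-point estimator (\ref{eq:def-euc-stab}) has the \emph{same} mean as the plain estimator (\ref{eq:def-est}): by the central symmetry of $\S^{n-1}$, the substitutions $v\mapsto -v$ and/or $w\mapsto -w$ turn each of the three sign-flipped terms into $\pm\E[\tfrac{n^2}{\delta^2}f(p+\delta v+\delta w)\,v w^\top]$, so the four-term difference collapses to four copies of the plain mean. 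Hence it suffices to bound $\|\nabla^2 \wt{f}^\delta(p) - \nabla^2 f(p)\|$.

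Next I would use the Euclidean form of (\ref{eq:def-surrogate}), $\wt{f}^\delta(p) = \frac{1}{\delta^{2n}V_n^2}\int_{\delta \B_p}\int_{\delta \B_p} f(p+v+w)\,dv\,dw$, and differentiate under the integral sign — justified by $f\in C^4$ and dominated convergence on the fixed compact domain $\delta\B_p \times \delta\B_p$, exactly as in the proof of Lemma~\ref{lem:surrogate} — to obtain $\nabla^2 \wt{f}^\delta(p) = \frac{1}{\delta^{2n}V_n^2}\int\int \nabla^2 f(p+v+w)\,dv\,dw$. Subtracting $\nabla^2 f(p)$ then rewrites the bias as the average of $\nabla^2 f(p+v+w) - \nabla^2 f(p)$ over the two balls.

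The sharp step is to Taylor-expand the \emph{Hessian field} $\nabla^2 f$ (not $f$) to first order, with integral/Lagrange remainder: $\nabla^2 f(p+h) = \nabla^2 f(p) + \partial^3 f(p)[h] + \tfrac12 \partial^4 f(\zeta)[h,h]$ where $h=v+w$. The linear term integrates to zero because $\int_{\delta \B_p} v\,dv = 0$ by the central symmetry of the ball, leaving only the remainder. Testing against a unit vector $u$ gives $u^\top(\nabla^2 \wt{f}^\delta(p) - \nabla^2 f(p))u = \frac{1}{2\delta^{2n}V_n^2}\int\int \partial^4 f(\zeta_{v,w})[u,u,v+w,v+w]\,dv\,dw$, which I would bound by $\frac{L_4}{2\delta^{2n}V_n^2}\int\int \|v+w\|^2\,dv\,dw$ using $\|\partial^4 f\|\le L_4$. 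The proof then closes with the exact second moment $\int_{\delta \B_p}\|v\|^2\,dv = \frac{nV_n}{n+2}\,\delta^{n+2}$ together with the vanishing cross term $\int\int v^\top w = 0$, which give $\int\int\|v+w\|^2 = \frac{2nV_n^2}{n+2}\delta^{2n+2}$ and hence the claimed bound $\frac{L_4 n \delta^2}{n+2}$, uniformly in $\delta$ and $p$.

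The main obstacle — and the reason I would expand $\nabla^2 f$ rather than $f$ — is preserving the sharp constant. A crude route that expands $f$ to third order and bounds the fourth-order remainder termwise by $L_4\|\cdot\|^4$ loses a factor of $n$, since it discards the cancellation that is captured automatically by averaging the full Hessian field; one can check that such a bound yields order $n L_4\delta^2$ instead of $\frac{n}{n+2}L_4\delta^2$. Keeping the remainder in the clean contracted form $\partial^4 f(\zeta)[u,u,v+w,v+w]$ is exactly what lets the ball moment $\int\|v\|^2\,dv$ produce the factor $\frac{n}{n+2}$. The only genuinely technical point is the differentiation under the integral sign, which is routine here given $C^4$ regularity and the fixed, compact integration domains.
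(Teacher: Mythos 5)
Your proposal is correct and follows essentially the same route as the paper's own proof: reduce via Lemma~\ref{lem:surrogate} to bounding $\| \nabla^2 \wt{f}^\delta (p) - \nabla^2 f (p) \|$, write $\nabla^2 \wt{f}^\delta (p)$ as the double ball-average of the Hessian field, Taylor-expand $\nabla^2 f$ (not $f$ itself) with Lagrange remainder so that the third-derivative term vanishes by symmetry of $\delta \B^n$, and bound the remainder by $L_4$ times the ball moment $\int_{\delta \B^n} \| v \|^2 \, dv = \frac{n V_n}{n+2} \delta^{n+2}$. The only differences are minor refinements in your favor: you explicitly verify that the stabilized four-point estimator (\ref{eq:def-euc-stab}) has the same mean as the plain estimator (\ref{eq:def-est}), and you bound the remainder pointwise by $L_4 \| v + w \|^2$ rather than relying on the paper's symmetry claim for the mixed term $\partial^4 f (p') (u,u,v,w)$, whose evaluation point $p'$ itself depends on $(v,w)$.
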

\begin{proof}[Proof. ]
    Firstly, note that the injectivity radius of Euclidean spaces is infinite. 
    By Lemmas \ref{lem:surrogate} and \ref{lem:est-H}, it suffices to consider $  \| \Hess \wh{f}^\delta (p) - \Hess f (p) \| $. In the Euclidean case, for any $u,v,p \in \R^n$, Taylor's theorem gives
    \begin{align*}
        \Hess f (p + v) (u,u) 
        = 
        \Hess f (p) (u,u) + \partial^3 f (p) (u,u,v) + \frac{1}{2} \partial^4 f (p') (u,u,v,v) 
    \end{align*}
    for some $p'$ depending on $p, u,v$. 
    Fix any unit vector $u \in \R^n$, we have 
    \begin{align*}
        &\; \Hess \wt{f}^\delta (p) (u,u) - \Hess f (p) (u,u) \\
        =& \;  
        \frac{1}{\delta^{2n} V_n^2} \int_{v \in \delta \B^n } \int_{w \in \delta \B^n } \Hess f (p + v + w) (u,u) \, d v \, d w 
        - 
        \Hess f (p) (u,u) \tag{$\delta \B^n$ is the origin-centered ball with radius $\delta$.}  \\
        =& \; 
        \frac{1}{\delta^{2n} V_n^2} \int_{v \in \delta \B^n } \int_{w \in \delta \B^n } \( \partial^3 f (p) (u,u,v+w) + \frac{1}{2} \partial^4 f (p') (u,u,v+w,v+w) \) \, d v \, d w \\
        =& \; 
        \frac{1}{2 \delta^{2n} V_n^2} \int_{v \in \delta \B^n } \int_{w \in \delta \B^n }  \partial^4 f (p') (u,u,v+w,v+w) \, d v \, d w . \tag{by symmetry of the ball $\delta \B^n$} 
    \end{align*} 
    By symmetry of $ \delta \B^n $, we know that $ \int_{v \in \delta \B^n } \int_{w \in \delta \B^n }  \partial^4 f (p') (u,u,v,w) \, d v \, d w  = 0 $. 
    Since $\| \partial^4 f (p) \| \le L_4 $ for all $p \in \R^n$, we have that 
    \begin{align*}
        \left|  \int_{v \in \delta \B^n } \int_{w \in \delta \B^n }  \partial^4 f (p') (u,u,v+w,v+w) \, d v \, d w  \right| 
        \le 
        L_4 \frac{2}{n (n+2)} A_n^2 \delta^{2n+2} , 
    \end{align*}
    where $A_n$ is the surface area of $\S^{n-1}$. Thus we have 
    \begin{align*}
        \left| \Hess \wt{f}^\delta (p) (u,u) - \Hess f (p) (u,u) \right| \le \frac{ L_4 n \delta^2 }{n+2} . 
    \end{align*} 
    We can conclude the proof since the above inequality holds for arbitrary unit vector $u$. \hfill
\end{proof}

\subsection{Zeroth Order Hessian Inversion}

\subsubsection{Hessian Adjugate Estimation by Cramer's Rule} 

Cramer's rule states that the inverse of a nonsingular matrix $A$ equals
\begin{align*} 
    A^{-1} = \frac{1}{\det (A)} \adj (A), 
\end{align*} 
where $\adj (A)$ is the adjugate of matrix $A$. Recall the adjugate of matrix $A$ is $$\adj (A) = \[ (-1)^{i+j} M_{ji} \]_{ \{ 1\le i,j\le n \} } ,$$ where $ M_{ji} = \det \( A_{-ji} \) $ and $A_{-ji}$ is the submatrix of $A$ by removing the $j$-th row and $i$-th column. As suggested by the Cramer's rule, one can estimate inverse of Hessian (up to scaling) by first estimating the unsigned minors of the Hessian and then gather the minors into a matrix. This estimation procedure is summarized in Algorithm \ref{alg:hess-adj-cramer}. 

\begin{algorithm}[h!] 
    \caption{Cramer-Hessian-Adjugate (\texttt{CHA})}   
    \label{alg:hess-adj-cramer} 
    \begin{algorithmic}[1] 
        \STATE \textbf{Input: } number of samples $m$; finite difference step size $\delta$; location for estimation $x$. 
        \STATE Uniformly independently sample $ \{ \(v_{k,ij,ab}, w_{k,ij,ab} \)\}_{1 \le k \le m, 1 \le i,j \le n, 1 \le a,b \le n} $ from $ \S^{n-1}$ (the unit sphere in $\S^{n-1}$). 
        \STATE For all $i,j \in [n] $ and $k \in [m]$, create $n^2$ estimators for the  $(i,j)$-submatrix of $ \[ \nabla^2  \wt{f}^\delta (x) \]_{-ij}$ by 
        \begin{align*} 
            \wh{\text{S}}_{k,ij,ab} 
            = 
            \[ \wh{\H} {f} (x; v_{k,ij,ab}, w_{k,ij,ab}; \delta) \]_{-ij}  , \quad \forall 1 \le i,j,a,b \le n, \; \; \forall k \in [m]. 
        \end{align*} 
        \STATE 
        Create estimators of $ \[ \nabla^2 \wt{f}^\delta (x) \]_{-ij} $ (written $\wh{\text{S}}_{k,ij}$) such that the $(a,b)$-th entry of $ \wh{\text{S}}_{k,ij} $ is the $(a,b)$-th entry of $ \wh{\text{S}}_{k,ij,ab} $ for all $a,b \in [n]$. 
        \STATEx /* In practice, one can use the entry-wise estimators to replace the estimator in Step 3. See Section \ref{sec:single-entry} for more details on entry-wise Hessian estimators. */ 
        \STATE For all $i,j \in [n] $, estimate the unsigned minors by 
        \begin{align*} 
            \wh{M}_{ij} 
            = 
            \frac{1}{m} \sum_{k=1}^m \det \( \wh{\text{S}}_{k,ij} \). 
        \end{align*} 
        /* The determinant can be computed via LU decomposition, QR decomposition, or similar methods. */
        \STATE Estimate the adjugate of Hessian by 
        \begin{align} 
            \ol{\A}_m \wt{f}^\delta (x) = \[ (-1)^{i+j} \wh{M}_{ji} \].  \label{eq:hess-est-cramer}
        \end{align} 
        \STATE \textbf{Output:} $\texttt{CHA} (m, \delta, x) = \ol{\A}_m \wt{f}^\delta (x) .$ 
    \end{algorithmic} 
\end{algorithm}

\begin{proposition} 
    \label{prop:hess-inv-cramer}
    Let $ \CHA (m , \delta, x) $ be the estimator returned by Algorithm \ref{alg:hess-adj-cramer}. If $\nabla^2 \wt{f}^\delta (x)$ is non-singular, it holds that 
    \begin{align*} 
        \E \[ \CHA (m , \delta, x) \] = \det \( \nabla^2 \wt{f}^\delta (x) \) \nabla^{-2} \wt{f}^\delta (x), 
    \end{align*} 
    where $  \nabla^{-2} \wt{f}^\delta (x) : = \[ \nabla^{2} \wt{f}^\delta (x) \]^{-1}$. 
\end{proposition}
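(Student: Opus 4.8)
The plan is to show that the $\CHA$ output is an unbiased estimator of the adjugate $\adj(\nabla^2 \wt{f}^\delta(x))$, and then invoke Cramer's rule to rewrite the adjugate as $\det(\nabla^2 \wt{f}^\delta(x)) \nabla^{-2}\wt{f}^\delta(x)$. The whole argument rests on two observations: each scalar entry produced by the Hessian estimator is unbiased for the matching entry of $\nabla^2 \wt{f}^\delta(x)$, and the determinant commutes with expectation whenever its matrix argument has mutually independent entries.

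First I would invoke Lemma \ref{lem:surrogate}, which gives $\E[\wh{\H} f (x; v, w; \delta)] = \Hess \wt{f}^\delta (x) = \nabla^2 \wt{f}^\delta(x)$, so that reading off any fixed $(a,b)$-th entry of $\wh{\mathrm{S}}_{k,ij,ab} = [\wh{\H} f (x; v_{k,ij,ab}, w_{k,ij,ab}; \delta)]_{-ij}$ yields an unbiased estimator of the corresponding entry of the deleted submatrix $[\nabla^2\wt{f}^\delta(x)]_{-ij}$. By the assembly rule in Step 4 of Algorithm \ref{alg:hess-adj-cramer}, the matrix $\wh{\mathrm{S}}_{k,ij}$ takes its $(a,b)$-th entry from the \emph{separately and independently} sampled pair $(v_{k,ij,ab}, w_{k,ij,ab})$. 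Consequently the $(n-1)^2$ entries of $\wh{\mathrm{S}}_{k,ij}$ are mutually independent, and each is unbiased for the matching entry of $[\nabla^2\wt{f}^\delta(x)]_{-ij}$.

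The key step is to evaluate $\E[\det(\wh{\mathrm{S}}_{k,ij})]$. Expanding via the Leibniz formula $\det(\wh{\mathrm{S}}_{k,ij}) = \sum_{\sigma} \mathrm{sgn}(\sigma) \prod_a [\wh{\mathrm{S}}_{k,ij}]_{a,\sigma(a)}$, each monomial is a product of entries taken from distinct matrix positions $(a,\sigma(a))$, which are independent by construction. The expectation of each product therefore factorizes into a product of expectations, giving $\E[\det(\wh{\mathrm{S}}_{k,ij})] = \det(\E[\wh{\mathrm{S}}_{k,ij}]) = \det([\nabla^2\wt{f}^\delta(x)]_{-ij}) =: M_{ij}$, the unsigned $(i,j)$-minor. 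I expect this to be the main obstacle, since it is the only place where the position-by-position independent resampling built into the algorithm is used: without independence across entries, the off-diagonal products in the Leibniz expansion would fail to factorize, and the determinant estimator would carry a nonzero covariance bias.

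Finally, averaging over $k$ gives $\E[\wh{M}_{ij}] = M_{ij}$, and assembling the signed minors yields $\E[\ol{\A}_m\wt{f}^\delta(x)] = [(-1)^{i+j} M_{ji}] = \adj(\nabla^2\wt{f}^\delta(x))$. Applying Cramer's rule $\adj(A) = \det(A) A^{-1}$ with $A = \nabla^2\wt{f}^\delta(x)$, which is nonsingular by hypothesis, then delivers the claimed identity $\E[\CHA(m,\delta,x)] = \det(\nabla^2\wt{f}^\delta(x))\,\nabla^{-2}\wt{f}^\delta(x)$.
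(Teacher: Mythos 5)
Your proposal is correct and follows essentially the same route as the paper's proof: unbiasedness of each submatrix entry via Lemma \ref{lem:surrogate}, interchange of determinant and expectation justified by the mutual independence of the separately sampled entries (the paper states this fact by noting the determinant is a polynomial in the entries; your Leibniz expansion just makes that explicit), and finally Cramer's rule to convert the adjugate into $\det(\nabla^2 \wt{f}^\delta(x))\,\nabla^{-2}\wt{f}^\delta(x)$. The only difference is expository: you spell out the factorization of each monomial $\prod_a [\wh{\mathrm{S}}_{k,ij}]_{a,\sigma(a)}$, which the paper leaves implicit.
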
 

\begin{proof}[Proof. ] 
    
    We will use the notations defined in Algorithm \ref{alg:hess-adj-cramer}. By Lemma \ref{lem:surrogate}, we know that, $ \forall i,j,a,b \in [n], \; \; \forall k \in [m]$, 
    \begin{align*} 
        \E \[ \wh{\text{S}}_{k,ij} \] 
        = \E \[ \wh{\text{S}}_{k,ij,ab} \] 
        = \[ \E \[ \wh{\H} {f} (x; v_{k,ij,ab}, w_{k,ij,ab}; \delta ) \] \]_{-ij} 
        =  
        \[ \nabla^2 \wt{f}^\delta (x) \]_{-ij} . 
    \end{align*} 
    
    Since (i) the determinant of a matrix can be expressed in terms of multiplication and addition of its entries, and (ii) all entries of $\wh{\text{S}}_{k,ij}$ are independent, we have 
    \begin{align*} 
        \E \[ \det \( \wh{\text{S}}_{k,ij} \) \] 
        = 
        \det \( \E \[  \wh{\text{S}}_{k,ij} \] \). 
    \end{align*} 
    
    By a use of the Cramer's rule and the above result, it holds that
    \begin{align*} 
        \E \[ \CHA (m , \delta, x) \] = \E \[ (-1)^{i+j} \wh{M}_{ji} \] = \adj \( \nabla^2 \wt{f}^\delta (x) \) = \det \( \nabla^2 \wt{f}^\delta (x) \) \nabla^{-2} \wt{f}^\delta (x). 
    \end{align*}
\end{proof}

The biggest advantage of the \texttt{CHA} method is that it gives an unbiased estimator of the adjugate matrix of $\nabla^2 \wt{f}^\delta (x)$. Also, Proposition \ref{prop:hess-inv-cramer} hold true even if $\nabla^2 \wt{f}^\delta (x)$ is non-definite. 
However, a shortcoming of the \texttt{CHA} method is its computational expense. For this reason, we introduce the following zeroth order Hessian inversion method, for a special class of Hessian matrices.

\subsubsection{Hessian Inverse Estimation by Neumman Series} 

An approach for computing the inverse of Hessian is via Neumman series. For an invertible matrix $A$ satisfying $ \lim_{p \rightarrow \infty} \( I - A \)^p = 0 $, the Neumann series expands the inverse of $A$ by 
\begin{align*}
    A^{-1} = \sum_{p=0}^\infty \( I - A \)^{-1}. 
\end{align*}
From this observation, we can first estimate the Hessian, and then estimate the inverse by the Neumann series. Previously, \citet{agarwal2017second} studied fast Neumann series based Hessian inversion using first-order information. Here a similar result can be obtained using zeroth-order information only. This zeroth-order extension of \citep{agarwal2017second} is summarized in Algorithm \ref{alg:hess-inv-neumman}. 

\begin{algorithm}[H] 
    \caption{Neumman-Hessian-Inverse (\texttt{NHI})} 
    \label{alg:hess-inv-neumman} 
    \begin{algorithmic}[1] 
        \STATE \textbf{Input: } number of samples $(m_1, m_2, m_3)$; finite difference step size $\delta$; location for estimation $x$. 
        \STATE Uniformly independently sample $ \{ \(v_{ijk}, w_{ijk} \)\}_{1 \le i \le m_1, 1 \le j \le m_2, 1 \le k \le m_3} $ from $ \S^{n-1}$. 
        \STATE For all $i,j,k$, compute 
        \begin{align*} 
            \wh{\H} {f} (x; v_{ijk}, w_{ijk}; \delta), 
        \end{align*} 
        as defined in (\ref{eq:def-euc-stab}). 
        \STATE Compute 
        \begin{align} 
            \ol{\H}_{m_1,m_2,m_3}^{-1} \wt{f}^\delta (x) =  \frac{1}{m_1} \sum_{i=1}^{m_1} \( I + \sum_{h=1}^{m_2} \prod_{j=1}^{h} \( I - \frac{1}{m_3} \sum_{k=1}^{m_3 } \wh{\H} {f} (x; v_{ijk}, w_{ijk}; \delta) \) \) .  \label{eq:def-inv-hess-neumann} 
        \end{align} 
        \STATE \textbf{Output:} $\texttt{NHI} (m_1,m_2,m_3, \delta, x) = \ol{\H}_{m_1,m_2,m_3}^{-1} \wt{f}^\delta (x) .$ 
    \end{algorithmic} 
\end{algorithm}

\begin{proposition}
    \label{prop:hess-est-neumann} 
    Suppose $f$ is twice-differentiable, $\alpha$-strongly convex and $\beta$-smooth with $\beta < 1$. 
    Then it holds that 
    \begin{align} 
        \left\| \E \[ \texttt{NHI} (m_1,m_2,m_3, \delta, x) \] - \nabla^{-2} \wt{f}^\delta (x)   \right\| \le \frac{ \( 1 - \alpha \)^{m_2 + 1} }{ \alpha}, 
    \end{align} 
    where $ \nabla^{-2} \wt{f}^\delta (x) := \[ \nabla^2 \wt{f}^\delta (x) \]^{-1} $. 
\end{proposition}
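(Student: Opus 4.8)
The plan is to reduce the claim to a deterministic estimate on the truncation error of a Neumann series, after showing that the expectation of the random estimator is \emph{exactly} the truncated Neumann series for $H := \nabla^2 \wt{f}^\delta(x)$. First I would record that the smoothing preserves the spectral bounds: in the Euclidean case $\Hess \wt{f}^\delta(p)(u,u)$ is an average of $\Hess f(p + v + w)(u,u)$ over $v,w \in \delta \B^n$ (as in the proof of Corollary \ref{cor:delta2}), and $\alpha I \preceq \nabla^2 f \preceq \beta I$ by hypothesis, so averaging yields $\alpha I \preceq H \preceq \beta I$. Consequently $I - H$ is symmetric with spectrum in $[1-\beta, 1-\alpha] \subset (0,1)$, hence $\| I - H \| = 1 - \alpha < 1$ and the Neumann series $\sum_{h=0}^\infty (I-H)^h$ converges to $H^{-1} = \nabla^{-2}\wt{f}^\delta(x)$.

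Next I would compute $\E[\texttt{NHI}]$. Writing $\wh{H}_{ij} := \frac{1}{m_3}\sum_{k=1}^{m_3} \wh{\H} f(x; v_{ijk}, w_{ijk}; \delta)$, the unbiasedness established in Lemma \ref{lem:surrogate} (together with the random-projection derivation of the stabilized estimator) gives $\E[\wh{H}_{ij}] = H$ for every $i,j$. The crucial observation is that the matrices $\{\wh{H}_{ij}\}_j$ are independent for fixed $i$, since they are built from disjoint blocks of the i.i.d.\ samples. Because the expectation of a product of independent random matrices factors entry-by-entry, I obtain $\E[\prod_{j=1}^h (I - \wh{H}_{ij})] = \prod_{j=1}^h \E[I - \wh{H}_{ij}] = (I-H)^h$ for each $h$. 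Averaging over $i = 1,\dots,m_1$ (all terms sharing this expectation) then gives
\begin{align*}
    \E\[ \texttt{NHI}(m_1, m_2, m_3, \delta, x) \] = I + \sum_{h=1}^{m_2} (I - H)^h = \sum_{h=0}^{m_2} (I-H)^h ,
\end{align*}
the Neumann series truncated at order $m_2$.

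Finally I would bound the tail. Subtracting the full series gives $\E[\texttt{NHI}] - H^{-1} = -\sum_{h=m_2+1}^\infty (I-H)^h$, and by the triangle inequality together with $\|(I-H)^h\| = \|I-H\|^h \le (1-\alpha)^h$ (using that $I-H$ is symmetric), I get
\begin{align*}
    \left\| \E\[ \texttt{NHI}(m_1,m_2,m_3,\delta,x) \] - \nabla^{-2}\wt{f}^\delta(x) \right\| \le \sum_{h = m_2+1}^\infty (1-\alpha)^h = \frac{(1-\alpha)^{m_2+1}}{\alpha} ,
\end{align*}
which is exactly the claimed bound; the geometric sum converges precisely because $1 - \alpha < 1$.

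The main obstacle I anticipate is justifying the factorization $\E[\prod_{j} (I - \wh{H}_{ij})] = (I-H)^h$: the factors are matrices that need not commute, so one cannot take expectations scalar-wise but must argue via independence across $j$ that the expectation of the product equals the product of expectations (a componentwise computation using $\E[XY] = \E[X]\E[Y]$ for independent $X,Y$). A secondary point requiring care is that $m_1$ and $m_3$ only reduce variance and leave the bias untouched, and that $\wt{f}^\delta$ genuinely inherits both $\alpha$-strong convexity and $\beta$-smoothness (with $\beta < 1$) from $f$, which is what forces the spectral radius of $I - H$ strictly below one.
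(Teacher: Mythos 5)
Your proposal is correct and follows essentially the same route as the paper's proof: unbiasedness of the averaged stabilized estimator reduces $\E[\texttt{NHI}(m_1,m_2,m_3,\delta,x)]$ to the truncated Neumann series $\sum_{h=0}^{m_2} (I - \nabla^2 \wt{f}^\delta(x))^h$, the smoothing inherits $\alpha$-strong convexity and $\beta$-smoothness from $f$ so that $0 \preccurlyeq I - \nabla^2 \wt{f}^\delta(x) \preccurlyeq (1-\alpha) I$, and the geometric tail yields $(1-\alpha)^{m_2+1}/\alpha$. If anything, you are more careful than the paper on the one delicate step, namely invoking independence of the blocks $\{\wh{H}_{ij}\}_{j}$ to factor the expectation of the non-commuting matrix product, which the paper performs without comment.
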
 



\begin{proof}[Proof. ] 
    
    Since $ f $ is $\alpha$-strongly convex, it holds that, for any $x ,y, v,w \in \R^n$,
    \begin{align*} 
        f ( x + v + w ) \ge f ( y + v + w ) + \( x - y \)^\top \nabla f (y + v + w)  + \frac{\alpha}{2} \left\| x - y \right\|^2 . 
    \end{align*} 
    Integrating both $v$ and $w$ over $\delta \B^n$ gives that 
    \begin{align*} 
        \wt{f}^\delta (x) \ge \wt{f}^\delta (y) + (x-y)^\top \nabla \wt{f}^\delta (y) + \frac{\alpha}{2} \left\| x - y \right\|^2, 
    \end{align*} 
    where we use the dominated convergence theorem to interchange the integral and the gradient operator. 
    This shows that $ \wt{f}^\delta $ is also $\alpha$-strongly.

    Since a differentiable function $f$ is $ \beta $-smooth if and only if $   f ( x ) \le f ( y ) + \nabla f (y) ^\top \( x - y \) + \frac{\beta}{2} \left\| x - y \right\|^2 $ for all $x, y \in \R^n$, one can show that $\wt{f}^\delta$ is $\beta$-smooth by repeating the above argument. 
    
    For $ \texttt{NHI} (m_1,m_2,m_3, \delta, x) $, we have 
    \begin{align*} 
        \E \[ \texttt{NHI} (m_1,m_2,m_3, \delta, x) \]
        =& \;  
        I + \sum_{h=1}^{m_2} \prod_{j=1}^{h} \( I - \E \[ \wh{\H} {f} (x; v_{ijk}, w_{ijk}; \delta) \] \) \\ 
        =& \; 
        \sum_{j=0}^{m_2} \( I - \nabla^2 \wt{f}^\delta (x) \)^j  . 
    \end{align*} 
    Since $ \wt{f}^\delta $ is $\alpha$-strongly convex, $\beta$-smooth ($\beta < 1$), and apparently twice-differentiable, we have 
    \begin{align*}
        0 \preccurlyeq I - \nabla^2 \wt{f}^\delta (x) \preccurlyeq \( 1 - \alpha \) I . 
    \end{align*}
    Thus we can bound the bias by 
    \begin{align*} 
        \left\| \E \[ \texttt{NHI} (m_1,m_2,m_3, \delta, x) \] - \nabla^{-2} \wt{f}^\delta (x) \right\|
        \le 
        \sum_{j = m_2 + 1}^{\infty } (1 - \alpha)^j = \frac{ \( 1 - \alpha \)^{m_2 + 1} }{ \alpha} . 
    \end{align*}
\end{proof}



\section{Existing Methods and Experiments}
\label{sec:exp}

\subsection{Existing Methods for Hessian Estimation}
\label{sec:existing}

\subsubsection{Hessian Estimation via Collecting Single Entry Estimations}  
\label{sec:single-entry}
In the Euclidean case, one can fix a canonical coordinate system $\{ \e_i \}_{i \in [n]}$, and the $(i,j)$-th entry of the Hessian matrix of $f $ at $x$ can be estimated by 
\begin{align}  
    \wh{\H}_{ij}^{\entry} {f} (x;\delta) 
    {=}& \;  
    \frac{ 1 }{ 4 \delta^2 } \big( f ( x + \delta \e_i + \delta \e_j ) - f ( x + \delta \e_i - \delta \e_j )  \nonumber \\
    &- f ( x - \delta \e_i + \delta \e_j )  + f ( x - \delta \e_i - \delta \e_j ) \big) .   \label{eq:hess-entry-entry} 
\end{align}
One can then gather the entries to obtain a Hessian estimator: 
\begin{align} 
    \wh{\H}^{\entry} {f} (x;\delta) = \[ \wh{\H}_{ij}^{\entry} {f} (x;\delta) \]_{i,j \in [n]} . \label{eq:hess-entry}
\end{align}

This estimator could perhaps date back to classic times when the finite difference principles were first used. Yet it needs at least $ \Omega (n^2) $ zeroth order samples to produce an estimator in an $n$-dimensional space. 
Previously, \citet{balasubramanian2021zeroth} designed a Hessian estimator based on the Stein's identity \citep{10.1214/aos/1176345632}. Their estimator only needs $O(1)$ zeroth-order function evaluations. This method is discussed in the next section. 

\subsubsection{Hessian Estimation via the Stein's identity}   

A classic result for Hessian computation is the Stein's identity (named after Charles Stein), as stated below. 
    
\begin{theorem}[Stein's identity]
\label{thm:rad-hess} 
    Consider a smooth function $f : \R^n \rightarrow \R$. For any point $x \in \R^n$, it holds that 
    \begin{align*} 
        \nabla^2 f (x ) = \frac{1}{ 2 } \E \[ \( u u^\top - I \) D_{uu} f (x) \], 
    \end{align*} 
    where 1. $u {\sim} \mathcal{N} (0, I )$, and 2. 
    \begin{align*} 
        D_{uu} f (x) =  \lim_{\tau \rightarrow 0}  \frac{ f ( x +  \tau u  ) - 2 f ( x ) + f ( x - \tau u  ) }{\tau^2} . 
    \end{align*} 
\end{theorem}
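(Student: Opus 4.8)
The plan is to reduce the claimed identity to a purely algebraic statement about Gaussian moments, stripping away all reference to $f$. First I would observe that the quantity $D_{uu} f (x)$ is nothing but the second directional derivative of $f$ at $x$ along $u$: a two-term Taylor expansion of $f(x + \tau u)$ and $f(x - \tau u)$ to second order shows that the first-order terms cancel in the symmetric difference, leaving
\begin{align*}
    D_{uu} f (x) = \lim_{\tau \to 0} \frac{ f(x + \tau u) - 2 f(x) + f(x - \tau u) }{ \tau^2 } = u^\top \nabla^2 f (x) \, u .
\end{align*}
Writing $H = \nabla^2 f (x)$, a fixed symmetric matrix once $x$ is fixed, the theorem then amounts to the matrix identity $H = \frac{1}{2} \E[ (u u^\top - I)\, u^\top H u ]$ with $u \sim \mathcal{N}(0, I)$, which no longer involves $f$ at all.

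Next I would prove this identity entry by entry. The $(i,j)$ entry of the right-hand side is $\frac{1}{2} \sum_{k,l} H_{kl} \, \E[ (u_i u_j - \delta_{ij}) u_k u_l ]$. The only input needed is the fourth-moment formula for a standard Gaussian (Isserlis'/Wick's theorem), $\E[u_i u_j u_k u_l] = \delta_{ij}\delta_{kl} + \delta_{ik}\delta_{jl} + \delta_{il}\delta_{jk}$, together with $\E[u_k u_l] = \delta_{kl}$. Subtracting gives $\E[(u_i u_j - \delta_{ij}) u_k u_l] = \delta_{ik}\delta_{jl} + \delta_{il}\delta_{jk}$, so that $\sum_{k,l} H_{kl}(\delta_{ik}\delta_{jl} + \delta_{il}\delta_{jk}) = H_{ij} + H_{ji} = 2 H_{ij}$, where the last step uses symmetry of $H$. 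Dividing by $2$ recovers $H_{ij}$, and since $i,j$ are arbitrary this establishes the matrix identity, hence the theorem.

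The part requiring care is the first step, namely the passage $D_{uu} f(x) = u^\top H u$ and its interaction with the expectation. For each fixed $u$ this is a clean pointwise limit, valid whenever $f$ is twice continuously differentiable, since the remainder in the second symmetric difference is $o(\tau^2)$; read this way, the expectation is simply taken of the already-evaluated quadratic form and no interchange of operations arises. If instead one wishes to keep the limit inside the expectation and swap $\lim_{\tau \to 0}$ with $\E$, I would justify this by dominated convergence, controlling the difference quotients uniformly by a quadratic in $u$ that is integrable against the Gaussian density — which is where a mild regularity or growth hypothesis on $f$ would enter. Everything downstream is the exact Gaussian moment computation above and carries no further subtlety, so I expect this limit-interchange justification to be the only genuine obstacle.
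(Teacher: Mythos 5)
Your proof is correct and follows essentially the same route as the paper's: both identify $D_{uu} f (x)$ with the quadratic form $u^\top \nabla^2 f (x) \, u$ and then reduce the claim to a fourth-moment computation for the standard Gaussian. The only organizational difference is that you invoke Isserlis'/Wick's theorem wholesale and work entry-wise with the Hessian abstracted to a fixed symmetric matrix $H$, whereas the paper derives the needed moments by explicit case analysis over index patterns and subtracts the Laplacian term $(\Delta f (x)) I$ at the end; the content is identical.
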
 

\begin{proof}[Proof. ]
    For completeness, a convenient proof of Theorem  \ref{thm:rad-hess} is provided in the Appendix. \hfill 
\end{proof}

One can estimate Hessian using the Stein's identity \citep{balasubramanian2021zeroth}: 
\begin{align}
    \wh{\H}^{\Stein} {f} (x; u; \delta) = \frac{ {f} (x + \delta u ) - 2 {f} (x ) + {f} (x - \delta u ) }{ 2 \delta^2 } (u u^\top - I), \label{eq:stein-hess} 
\end{align} 
where $ u \sim \mathcal{N} (0,I) $ is a standard Gaussian vector. A bias bound for (\ref{eq:stein-hess}) is in Theorem \ref{thm:stein-hess}. 

\begin{theorem}[\citet{li2020stochastic,balasubramanian2021zeroth}] 
    \label{thm:stein-hess} 
    Let $f$ have $L_2$-Lipschitz Hessian: There exists a constant $L_2$ such that $ \| \nabla^2 f (x) - \nabla^2 f (x') \| \le L_2 \| x - x' \|$ for all $x,x' \in \R^n$. The estimator in (\ref{eq:stein-hess}) satisfies 
    \begin{align*} 
        \left\| \E \[ \wh{\H}^{\Stein} {f} (x; u; \delta) \] - \nabla^2 f (x) \right\| \le \frac{ L_2 \( n + 6 \)^{\frac{5}{2}} \delta }{4} , 
    \end{align*} 
    for any $x \in \R^n$ and any function $f : \R^n \rightarrow \R $ with $L_2$-Lipschitz Hessian. 
\end{theorem}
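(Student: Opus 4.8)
The plan is to reduce everything to Stein's identity (Theorem \ref{thm:rad-hess}) together with a Taylor estimate for the symmetric finite-difference quotient. First I would observe that the second directional difference equals a second directional derivative in the limit, so that $D_{uu}f(x) = u^\top \nabla^2 f(x)\, u$; hence Theorem \ref{thm:rad-hess} can be rewritten as $\nabla^2 f(x) = \tfrac12\,\E[(uu^\top - I)\, u^\top \nabla^2 f(x)\, u]$. Abbreviating the scalar quotient in (\ref{eq:stein-hess}) by $R(u;\delta) := \frac{f(x+\delta u) - 2 f(x) + f(x-\delta u)}{2\delta^2}$, the estimator is exactly $(uu^\top - I)\,R(u;\delta)$, so subtracting the rewritten Stein identity produces the clean expression
\begin{align*}
    \E\!\left[\wh{\H}^{\Stein} f(x;u;\delta)\right] - \nabla^2 f(x) = \E\!\left[(uu^\top - I)\Big(R(u;\delta) - \tfrac12\, u^\top \nabla^2 f(x)\, u\Big)\right].
\end{align*}
The whole problem thus collapses to controlling the scalar discrepancy $E(u) := R(u;\delta) - \tfrac12\, u^\top \nabla^2 f(x)\, u$ and then pushing it through the matrix factor $uu^\top - I$.

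Second, I would bound $E(u)$ using Taylor's theorem with integral remainder applied to $t \mapsto f(x \pm t u)$. Adding the two expansions cancels the odd, first-order term and yields $f(x+\delta u) - 2 f(x) + f(x-\delta u) = \int_0^\delta (\delta - t)\, u^\top[\nabla^2 f(x+tu) + \nabla^2 f(x-tu)]\, u\, dt$. Since $\int_0^\delta (\delta - t)\, dt = \delta^2/2$, I can center this against $u^\top \nabla^2 f(x)\, u$ and invoke the $L_2$-Lipschitz Hessian hypothesis, $\|\nabla^2 f(x \pm tu) - \nabla^2 f(x)\| \le L_2\, t\, \|u\|$, to obtain the per-sample bound $|E(u)| \le \frac{L_2 \|u\|^3 \delta}{6}$.

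Finally, I would pass to the spectral norm via $\|\E[M(u)]\| \le \E[\|M(u)\|]$ for $M(u) = (uu^\top - I) E(u)$, which is symmetric; equivalently, since $\|A\| = \sup_{\|a\|=1} |a^\top A a|$ for symmetric $A$, it suffices to bound $\E\big[|(a^\top u)^2 - 1|\,|E(u)|\big]$ uniformly over unit vectors $a$. Combined with the per-sample estimate, this reduces the bias bound to a Gaussian moment computation: a Cauchy--Schwarz split $\E[|(a^\top u)^2 - 1|\,\|u\|^3] \le \sqrt{\E[((a^\top u)^2 - 1)^2]}\,\sqrt{\E[\|u\|^6]}$, using $\E[((a^\top u)^2 - 1)^2] = 2$ and $\E[\|u\|^6] = n(n+2)(n+4)$, already delivers a bound of the claimed order. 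I expect the main obstacle to be precisely this last moment bookkeeping --- converting the $\chi$-type moments $\E[\|u\|^k]$ into the clean dimensional factor and tracking the constant $\tfrac14$ --- so that the loosened estimate matches the stated form $\frac{L_2 (n+6)^{5/2}\delta}{4}$; everything preceding it is essentially mechanical.
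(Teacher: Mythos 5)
Your proof is correct, but note that the paper itself never proves Theorem \ref{thm:stein-hess}: the bound is imported verbatim from \citet{li2020stochastic,balasubramanian2021zeroth}, and the only related ingredient proved in the paper is Stein's identity (Theorem \ref{thm:rad-hess}, in the appendix). So the comparison here is between your self-contained argument and an external citation. Your route is sound at every step: the decomposition of the bias as $\E [ (uu^\top - I)(R(u;\delta) - \tfrac12 u^\top \nabla^2 f(x) u) ]$ is legitimate because, for $f$ with Lipschitz (hence continuous) Hessian, $D_{uu}f(x) = u^\top \nabla^2 f(x)\,u$, so Theorem \ref{thm:rad-hess} applies; alternatively you can avoid any appeal to limits by checking the pure moment identity $\tfrac12 \E[(uu^\top - I)\, u^\top H u] = H$ for a fixed symmetric matrix $H$, which is exactly the fourth-moment computation in the paper's appendix. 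The Taylor estimate $|E(u)| \le L_2 \|u\|^3 \delta / 6$ is right, and the reduction $\| \E[M(u)]\| \le \sup_{\|a\|=1} \E[\, |(a^\top u)^2 - 1|\, |E(u)|\,]$ (valid since each $M(u)$ is symmetric) followed by Cauchy--Schwarz with $\E[((a^\top u)^2-1)^2] = 2$ and $\E[\|u\|^6] = n(n+2)(n+4)$ gives the bias bound $\tfrac{L_2\delta}{6}\sqrt{2n(n+2)(n+4)}$. This is in fact strictly sharper than the claimed bound --- it is of order $L_2 n^{3/2}\delta$ rather than $L_2 n^{5/2}\delta$ --- and the stated inequality follows a fortiori since $\sqrt{2n(n+2)(n+4)}/6 \le \sqrt{2}\,(n+6)^{3/2}/6 \le (n+6)^{5/2}/4$. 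One consequence worth noting: with your sharper constant, the downscaled comparison in Remark \ref{remark} would read $O(L_2 n \delta)$ rather than $O(L_2 n^2 \delta)$ for the Stein-type estimator, so the gap in powers of $n$ shrinks, although the qualitative conclusion ($O(\delta)$ for Stein versus $O(\delta^2)$ for the paper's estimator) is unchanged.
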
 

The estimator (\ref{eq:stein-hess}) improves the entry-wise estimator in the sense that  one only needs $O(1)$ samples to produce an estimator. However, its error bound given by Theorem \ref{thm:stein-hess} is worse than that of (\ref{eq:def-euc-stab}) in Theorem \ref{thm:hess-bias}. A more detailed discussion on the error bounds is in Remark \ref{remark}. 

\begin{remark} 
    \label{remark} 
    We need to note that our estimator (\ref{eq:def-euc-stab}) and the estimator via Stein's method (\ref{eq:stein-hess}) have different finite-difference step size. More specifically, $ \E_{v,w \overset{i.i.d.}{\sim} \S^{n-1} } \[ \delta \| v + w \| \] = \Theta \( \delta \) $ for (\ref{eq:def-euc-stab}) and $\E_{u \sim \mathcal{N} (0,I_n)} \[ \delta \| u \| \] = \Theta \( \sqrt{n} \delta \) $ for (\ref{eq:stein-hess}). To compare the bias bounds for (\ref{eq:def-euc-stab}) and (\ref{eq:stein-hess}) using the same (expected) finite-difference step size, we need to downscale the bound in Theorem \ref{thm:stein-hess} by a factor of $\sqrt{n}$. After this downscaling, the error bound for the Stein-type estimator (\ref{eq:stein-hess}) is $ O \( L_2 n^2 \delta \) $ which is worse than the bias bound bound for our estimator (\ref{eq:def-euc-stab}). In the experiments, we down scale the finite-difference step size when studying all results of Stein's method estimator. 
\end{remark} 

As discussed in Remark \ref{remark}, a difference between (\ref{eq:def-euc-stab}) and (\ref{eq:stein-hess}) is that they sample random vectors from different distributions: uniformly random vector from the unit sphere for (\ref{eq:def-euc-stab}) and standard Gaussian vector for (\ref{eq:stein-hess}). 
High moments of uniformly random vectors from the unit sphere are smaller than Gaussian vectors of same expected norm. More specifically, The $k$-th moment of (norm of) a standard Gaussian vector $ v \sim \mathcal{N} (0, I_n) $ that is downscale by a factor of $\sqrt{n}$ is 
\begin{align*}
    &\; n^{-k/2} \E \[ \| v \|^{k} \] \\
    =&\; 
    \frac{n^{-k/2}}{ \( 2\pi \)^{-n/2} } \int_0^\pi \int_0^\pi \cdots \int_0^{2\pi} \int_{0}^\infty r^{k + n - 1}  e^{ -\frac{ r^2 }{2}  } \, dr \, \sin ^{n-2}(\varphi _{1})\sin ^{n-3}(\varphi _{2})\cdots \sin(\varphi _{n-2})\,d\varphi _{1}\,d\varphi _{2}\cdots d\varphi _{n-1} \, \\ 
    =&\; 
    n^{-k/2} \( 2\pi \)^{-n/2} A_n \int_{0}^\infty r^{k + n - 1}  e^{ -\frac{ r^2 }{2}  } \, dr \,  \tag{$A_n$ is the surface area of the Euclidean unit sphere $\S^{n-1}$} \\ 
    =& \; 
    n^{-k/2} \( 2\pi \)^{-n/2} \frac{ 2 \pi^{n/2} }{\Gamma \( \frac{n}{2} \)} 2^{\frac{n+k}{2} - 1 } \Gamma \( \frac{k + n}{2} \) \\ 
    =& \; 
    n^{-k/2} 2^{\frac{k}{2} } \frac{ \Gamma \( \frac{k + n}{2} \) }{\Gamma \( \frac{n}{2} \)} \\
    \sim&\; 
    n^{-k/2} 2^{\frac{k}{2} } \frac{ {\sqrt {\frac {2\pi }{ \frac{k+n}{2} }}}\,{\left({\frac { \frac{k+n}{2} }{e}}\right)}^{ \frac{k+n}{2} } }{ {\sqrt {\frac { 2\pi }{ \frac{n}{2} }}}\,{\left({\frac{ \frac{n}{2} }{e}}\right)}^{ \frac{n}{2} } } \tag{by Stirling's approximation} \\
    =& \; 
    \( \frac{e n}{2} \)^{-k/2} \sqrt{  \frac{n}{n+k} } \( \frac{n+k}{2} \)^{n/2+k/2} \( \frac{n}{2} \)^{-n/2}
\end{align*} 
which clearly grows very fast with $k$ for large $k$ and for any fixed $n$. On contrary, the moments of (norm) of the vector uniformly sampled from the unit sphere are all $ 1 $. This difference implies that our estimator tends to have smaller higher order moments. 

\subsection{Numerical Results} 

We test the performance of our estimator against the previous two estimators in noisy environments. Before proceeding, we re-define some notations for the estimators, so that the estimators are tested on the same ground and noise is properly taken into consideration. The estimators we will empirically study are
\begin{enumerate} 
    \item Our new estimator: 
    \begin{align} 
        & \; \wh{\H}^{\text{new}} f (p; m ; \delta) \nonumber \\
        =& \;  
        \sum_{ k=1 }^{ \lfloor \frac{m}{4} \rfloor } \frac{n^2}{ \delta^2} \big[ \epsilon_k +  f ( \Exp_p ( \delta v_k + \delta w_k) ) - f ( \Exp_p ( -\delta v_k + \delta w_k ) ) \nonumber  \\
        & - f ( \Exp_p ( \delta v_k - \delta w_k) ) + f ( \Exp_p ( - \delta v_k - \delta w_k) )  \big] ( v_k \otimes w_k + w_k \otimes v_k ) ,  \label{eq:redef-new}
     \end{align} 
    where $v_k, w_k \overset{i.i.d.}{\sim} \S_p $, and $\epsilon_k$ is a mean-zero noise that is independent of all other randomness. 
    \item The Stein's estimator: 
    \begin{align} 
        &\; \wh{\H}^{\text{Stein}} f (p; m ; \delta) \nonumber \\
        =& \;  
        \sum_{ k=1 }^{ \lfloor \frac{m}{3} \rfloor } \frac{ n }{ 2 \delta^2 } \[ {f} \( \Exp_p \( \frac{ \delta u_k}{ \sqrt{n} } \) \) - 2 {f} ( p ) + {f} \( \Exp_p \( \frac{ -\delta u_k}{ \sqrt{n} } \) \) + \epsilon_k \] \nonumber \\
        & \qquad \cdot (u_k \otimes u_k - I) , \label{eq:redef-stein}
    \end{align} 
    where $ u_k \overset{i.i.d.}{\sim} \mathcal{N} (0, I) $ (the standard Gaussian in $ T_p \M $), $I$ is the identity map from $ T_p \M $ to itself (As a bilinear form, $I (u,v) = \< u,v \>_p$ for any $u,v \in T_p \M$.), and $\epsilon_k$ is a mean-zero noise that is independent of all other randomness. 
    \item The entry-wise estimator: 
    \begin{align} 
        \wh{\H}^{\text{entry}} f (p; m ; \delta) = \[ \wh{\H}_{ij}^{\text{entry}} f (p; m ; \delta ) \]_{i,j \in [n]} , \label{eq:redef-entry}
    \end{align}
    where 
    \begin{align}
        \wh{\H}_{ij}^{\text{entry}} f (p; m ; \delta ) 
        =& \;  
        \frac{ 1 }{ 4 \delta^2 } \sum_{ k = 1 }^{ \lfloor \frac{m}{4 n^2} \rfloor } \big( f ( \Exp_p ( \delta \e_i + \delta \e_j ) ) - f ( \Exp_p ( \delta \e_i - \delta \e_j ) )  \nonumber \\
        &- f ( \Exp_p ( - \delta \e_i + \delta \e_j ) )  + f ( \Exp_p ( - \delta \e_i - \delta \e_j ) ) + \epsilon_k \big) , \nonumber 
    \end{align} 
    $\{\e_i\}_i$ is the local normal coordinate for $T_p \M$, and $\epsilon_k$ is a mean-zero noise that is independent of all other randomness. 
\end{enumerate} 

\begin{table}[b!]
    \caption{Manifolds used for testing. The local metric near $p$ is implicitly specified by the exponential map. \label{tab}  } 
    
    \begin{tabular}{c|c|c|c} 
      Manifold  & $ p$ ($p \in \R^{n+1}$) & $ h ( x ) $, $x \in T_p \M \cong \R^n$ & $\Exp_p (v)$ \\ \hline \hline 
     (I) & $p=0$ & $h (x) = 0$ & $ (v,h (v)) $ \\ 
     (II) & $p=0$ &  $h (x) = 1 - \sqrt{  1 - \sum_{i=1}^n x_i^2 } $ & $ (v, h (v) ) $ \\ 
     (III) & $p=0$ &  $h (x) = \sum_{i=1}^{n/2} x_i^2 - \sum_{i = n/2+1}^n x_i^2 $ & $ (v, h (v) ) $ 
    \end{tabular} 
\end{table} 

\begin{figure}[h!]
    \centering
    \subfloat[][$\delta = 0.05$]{\includegraphics[width = 0.45\textwidth]{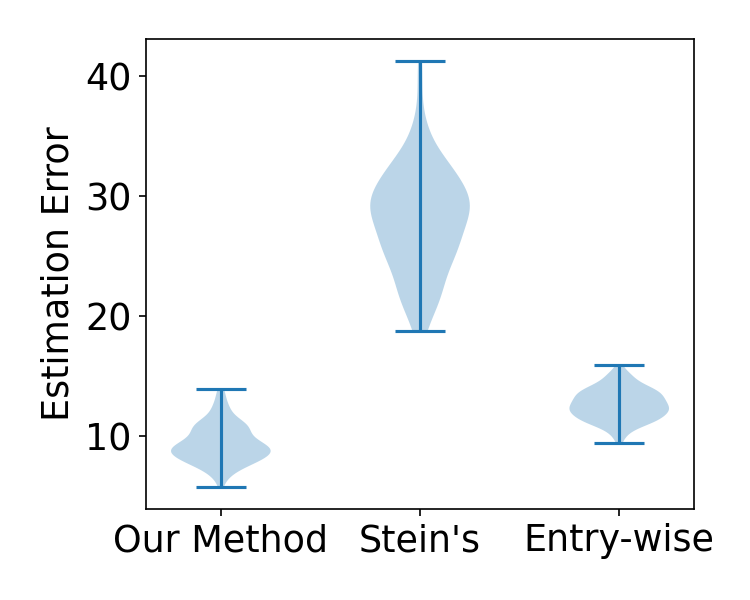} } 
    \subfloat[][$\delta = 0.1$]{\includegraphics[width = 0.45\textwidth]{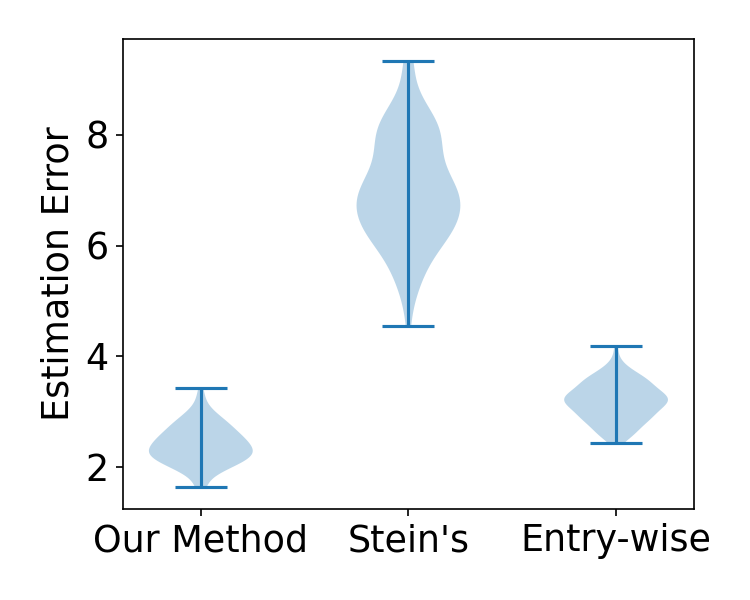} } \\
    \begin{centering} 
        \subfloat[][$\delta = 0.2$]{\includegraphics[width = 0.45\textwidth]{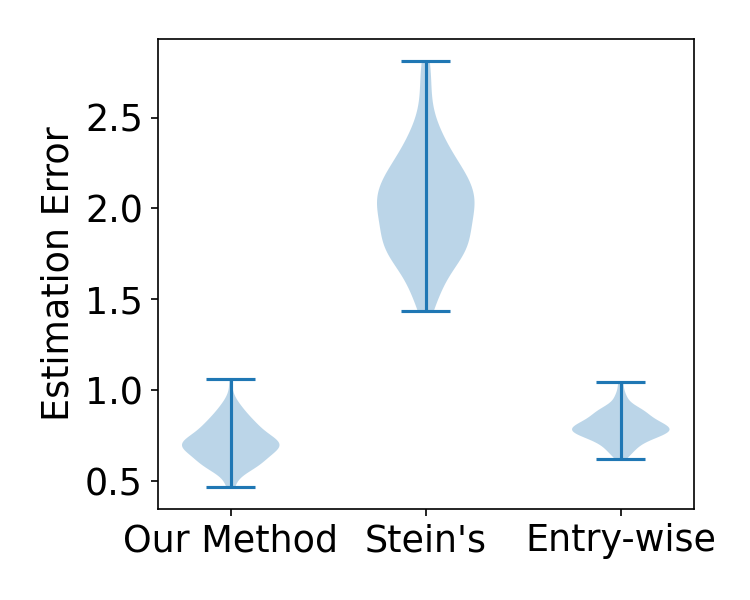} } 
    \end{centering} 
    \caption{Results for the Manifold (I). Each violin plot summarizes estimation error of 100 estimations. More specifically, each estimation in this figure uses $m = 3840$ function evaluations, and 100 estimations are used to generate one violin plot. On the $x$-axis, ``Our method'' corresponds to our estimator (\ref{eq:redef-new}); ``Stein's'' corresponds to the Stein's method (\ref{eq:redef-stein}); ``Entry-wise'' corresponds to the entry-wise estimator (\ref{eq:redef-entry}). Subfigures (a), (b), (c) corresponds to $\delta = 0.05$, $\delta = 0.1$, $\delta = 0.2$. \label{fig}  } 
\end{figure}

\begin{table}[b!]
    \caption{Timing results in seconds, rounded to 1e-4 accuracy. In the table, ``Our method'' stands for the estimator (\ref{eq:redef-new}), and ``Stein's'' stands for the estimator (\ref{eq:redef-stein}). The time consumption of the estimators are divided into three parts: (1) sampling time, used for generating the random vectors (uniformly random unit vectors for our methods, and standard Gaussian vectors for the Stein's method), (2) evaluation time, used for accessing function values, and (3) computation time, used for matrix manipulations (e.g., outer product computation). In the timing experiments, both estimators (\ref{eq:redef-new}) and (\ref{eq:redef-stein}) use $m = 10,000$, $\delta = 0.05$ and $n = 8$. All timing results are averaged 10 times, presented in a ``mean $\pm$ standard deviation'' format. The last two columns are cumulative, to avoid fast memory access to saved data. \label{tab:timing} } 
    \begin{tabular}{c|c|c|c} 
    \hline 
        & Sampling & Sampling + Evaluation & \makecell{Sampling + Evaluation \\ + Computation} \\ \hline \hline 
    Our method & $0.1580 \pm 0.0031$ & $0.5763 \pm 0.0036$ & $0.6977\pm 0.0040$ \\ 
    Stein's  & $0.0257 \pm 0.0012$ & $0.3020\pm 0.0024$ & $0.4222\pm 0.0028$ \\ 
     \hline 
    \end{tabular} 
\end{table} 

Strictly speaking, the noises $\epsilon_k$ corrupt all the zeroth-order function value observations. Specifically, the notation $ \epsilon_k +  f ( \Exp_p ( \delta v_k + \delta w_k) ) - f ( \Exp_p ( -\delta v_k + \delta w_k ) )  - f ( \Exp_p ( \delta v_k - \delta w_k) ) + f ( \Exp_p ( - \delta v_k - \delta w_k) )  $ should be understood in the following way. All four functions values $ f ( \Exp_p ( \delta v_k + \delta w_k) )$, $ f ( \Exp_p ( -\delta v_k + \delta w_k ) )$, $f ( \Exp_p ( \delta v_k - \delta w_k) ) $, and $ f ( \Exp_p ( - \delta v_k - \delta w_k) ) $ are corrupted with mean-zero and independent noise and not directly observable. Note that all previously discussed bias bounds hold when the function evaluations are corrupted by independent mean-zero noise. 

The above notations allow us to put all the estimators on the same ground more easily. With the new notations, all $ \wh{\H}^{\text{new}} f (p; m ; \delta )  $, $ \wh{\H}^{\text{Stein}} f (p; m ; \delta )  $ and $ \wh{\H}^{\text{entry}} f (p; m ; \delta )  $ uses $ m $ functions value observations and have an expected finite difference step size $\Theta (\delta)$. The redefining of the estimators is needed since 1. the entry-wise estimator needs more samples to output an estimate, and 2. the default Stein's method in expectation uses a larger finite-difference step-size, as discussed in Remark \ref{remark}. 

\begin{remark}
The estimator we introduced (\ref{eq:redef-new}) has a practical advantage over that via the Stein's identity (\ref{eq:redef-stein}). The reason is that estimators based on the Stein's identity requires one to explicitly know the identity map from $T_p \M$ to itself. This map may or may not admit an easy numerical representation. For example, for the real Stiefel's manifold $\St (n,k) = \{ X \in \R^{n \times k}: X^\top X = I \}$, we know that the map 
    \begin{align*}
        P_X Z := (I - XX^\top  ) Z + \frac{1}{2} X \( X^\top Z - Z^\top X \) 
        , \qquad \forall Z \in \R^{n \times k}, 
    \end{align*} 
    is the identity from $T_X \St (n,k)$ to itself \citep[e.g.,][]{absil2009optimization}. Also, this map projects any $ Z \in \R^{n \times k} $ onto $T_X \St (n,k)$. 
    Extracting a numerical representation of this map $P_X$ may not be easy. On contrary, for any $Z_1, Z_2 \in T_X \St (n,k)$, computing $ Z_1 \otimes Z_2 $ is tractable. More specifically, one can use the following procedure to obtain a uniformly random vector from the unit sphere in $T_X \St (n,k)$ for a given $X \in \St (n,k)$. 
    One can (1) sample an $i.i.d.$ Gaussian matrix $G$ from $\R^{n,k}$, (2) compute $P_X G$, and (3) normalize $ P_X G $ with respect to the Frobenius inner product. By rotational invariance (of the standard Gaussian distribution and the Frobenius norm), this procedure outputs a uniformly random unit matrix in $T_X \St (n,k)$. Once we have the unit vectors in $T_X \St (n,k)$, we can numerically compute their tensor product. 
\end{remark}

All three methods are tested using the following test function, defined using the standard Cartesian coordinate system in $\R^{n+1}$,
\begin{align*} 
    f (x) = \sum_{i=1}^{n+1} \cos ( x_i ) + \exp ( x_1 x_2 ). 
\end{align*} 
Every function evaluation is corrupted with an independent noise sampled from $\mathcal{N} (0, 0.0025)$. The estimators are tested over three manifolds in $\R^{n+1}$. More details about the three manifolds are in Table \ref{tab}. In all settings, we set the number of total function evaluation $ m = 3840 $ and dimension of manifold $n = 8$. The results for manifold (I), the Euclidean case, is in Figure \ref{fig}. Results for manifold (II) and manifold (III) are in Appendix \ref{app:exp}. In Figure \ref{fig} (and Figures \ref{fig2} and \ref{fig3} in Appendix \ref{app:exp}), the errors on the $y$-axis plots the norm of the difference between the empirical estimator and the truth: 
\begin{align*} 
    \left\| \wh{\H} f (p; v,w; \delta) - \Hess {f} (p) \right\| . 
\end{align*}

    

\subsection{Time Efficiency Comparison}

We compare the time efficiency of our method and the estimator based on the Stein's identity. 
In general, one expects estimators based on the Stein's identity to be more time-efficient. Main reasons for this include that the estimator based on the Stein's identity needs only 3 function evaluations instead of 4. In practice, the function evaluations may or may not be cheap. When the functions evaluations are expensive, we may expect that estimators based on the Stein's identity approximately saves 1/4 time, compared to our method. When the function evaluations are cheap, our estimator (\ref{eq:redef-new}) is in general more time consuming as well, since more sampling and more matrix computations need to be carried out. 

In Table \ref{tab:timing}, we compare the running time of (\ref{eq:redef-new}) and (\ref{eq:redef-stein}). All timing experiments use the same benchmark function and underlying manifold as Figure \ref{fig}. We use $n = 8$, $m = 10,000$ and $\delta = 0.05$ for timing experiments. All timing experiments are carried out in an environment with 
\begin{itemize}
    \item 10 cores and 20 logical processors with a maximum speed of 2.80 GHz; 
    \item 32GB RAM;
    \item Windows 11 22000.832; 
    \item Python 3.8.8. 
\end{itemize} 


\section{Conclusion}

In this paper, we study Hessian estimators over Riemannian manifolds. 
We design a new estimator, such that for real-valued analytic functions over an $n$-dimensional complete analytic Riemannian manifold, our estimator achieves an $ O ( \gamma \delta^2 ) $ expected error, where $\gamma $ depends both on the Levi-Civita connection and the function $f$, and $\delta$ is the finite difference step size. Downstream computations of Hessian inversion is also studied. Empirical studies show supremacy of our method over existing methods. 

\section*{Data Availability Statement}

No new data were generated or analysed in support of this paper. Code for the experiments is available at \url{https://github.com/wangt1anyu/zeroth-order-Riemann-Hess-code}. 


\appendix 

\section{Proof of Theorem \ref{thm:rad-hess}}


\begin{proof}[Proof of Theorem \ref{thm:rad-hess}. ] 
    
    Consider $ \E \[ u_k u_h  u_{i} u_{j} \partial_i \partial_j \] $ for any $k,h,i,j \in [n]$. 
    
    When $(k,h) = (i,j)$, one has $ \E \[ u_k u_h u_i u_j \partial_i \partial_j \] = \E \[ u_i^2 u_j^2 \partial_i \partial_j \] $. In this case, it holds that 
    \begin{align*} 
        \E \[ u_i^2 u_j^2 \partial_i \partial_j \] 
        &= \partial_k \partial_h \quad \text{for } i \neq j \quad \text{and} \quad \E \[ u_i^4 \partial_i \partial_i \] = 3 \partial_k \partial_k , \quad \text{for } i = j. 
    \end{align*} 
    
    When $(k,h) \neq (i,j)$, $ i = j$ and $k = h$, we have $ \E \[ u_k^2 u_i^2 \partial_i \partial_j \] = \partial_i \partial_i $. 
    
    When $(k,h) \neq (i,j)$, $ i = j$ and $k\neq h$, we have $ \E \[ u_k u_h u_i u_j \] = 0 $. 
    
    When $(k,h) \neq (i,j)$, $ i \neq j$ and $k= h$, we have $ \E \[ u_k u_h u_i u_j \] = 0 $.
    
    When $(k,h) \neq (i,j)$, $ i \neq j$, $k\neq h$, $k = j$ and $h=i $, we have $ \E \[ u_k u_h u_i u_j \partial_i \partial_j \] = \E \[ u_i^2 u_j^2 \partial_h \partial_k \] = \partial_h \partial_k $. 
    
    When $(k,h) \neq (i,j)$, $ i \neq j$, $k\neq h$, $k = i$ and $h=j $, we have $ \E \[ u_k u_h u_i u_j \partial_i \partial_j \] = \E \[ u_i^2 u_j^2 \partial_k \partial_h \] = \partial_k \partial_h $. 
    
    When $(k,h) \neq (i,j)$, $ i \neq j$, $k\neq h$ and $k \neq j$, we have $ \E \[ u_k u_h u_i u_j \] = 0 $. 
    
    When $(k,h) \neq (i,j)$, $ i \neq j$, $k\neq h$ and $h \neq i$, we have $ \E \[ u_k u_h u_i u_j \] = 0 $. 
    
    Now using Einstein's notation and combining all above cases give
    \begin{align} 
        \E \[  u^k u_h  u^i u_j \partial_i \partial^j \] 
        =& \;
        \partial^k \partial_h (1 - \delta_{k}^h ) +  \partial^h \partial_k (1 - \delta_{h}^k ) + \delta_{k}^h  \partial_i \partial^i + 2  \partial^k \partial_h \delta_k^h \nonumber \\
        \overset{(i)}{=}& \;
        2  \partial^k \partial_h + \delta_{k}^h  \partial_i \partial^i , \nonumber
    \end{align} 
    where $\delta_k^h$ is the Kronecker's delta. 
    
    Since $D_{uu} f(p) = u^{i} u_{j} \partial_i \partial^j f (x)$ for all $u$ and $x$, we can write $ u u^\top D_{uu} f (x) = u^k u_h  u^i u_j \partial_i \partial^j f (x) $. Thus rearranging terms in $(i)$ gives 
    \begin{align*}
        \E \[ u u^\top D_{uu} f (x) \] \overset{(ii)}{=} 2  \nabla^2 f (x) + \( \Delta f (x) \) I, 
    \end{align*} 
    where $ \Delta =  \partial_i \partial^i $ is the Laplace operator. 
    
    Since $ \E \[ D_{uu} f (x) \] = \E \[ u^i u_j \partial_i \partial^j f (x) \] = \delta_i^j \partial_i \partial^j f (x) = (\Delta f (x)) I $, rearranging terms in $(ii)$ concludes the proof. \hfill
\end{proof}

\section{Additional Experimental Results}
\label{app:exp}

\begin{figure}[h!]
    \centering
    \subfloat[][$\delta = 0.05$]{\includegraphics[width = 0.3 \textwidth]{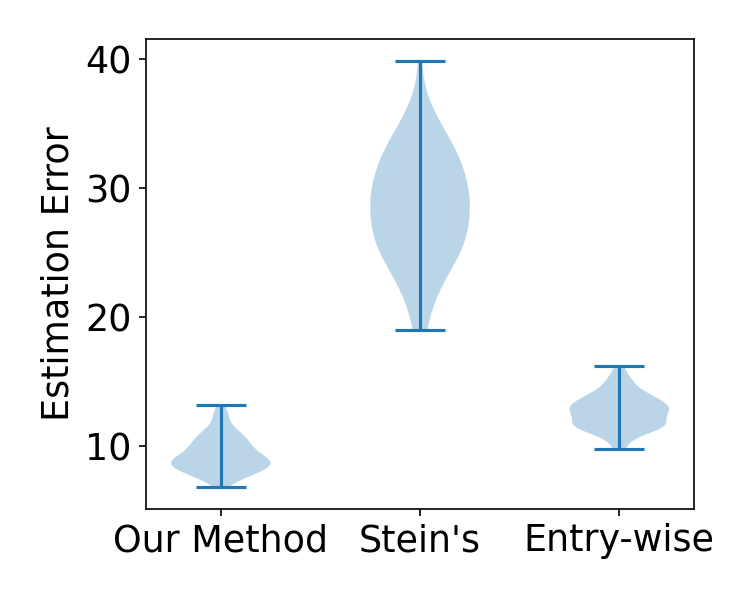} } 
    \subfloat[][$\delta = 0.1$]{\includegraphics[width = 0.3 \textwidth]{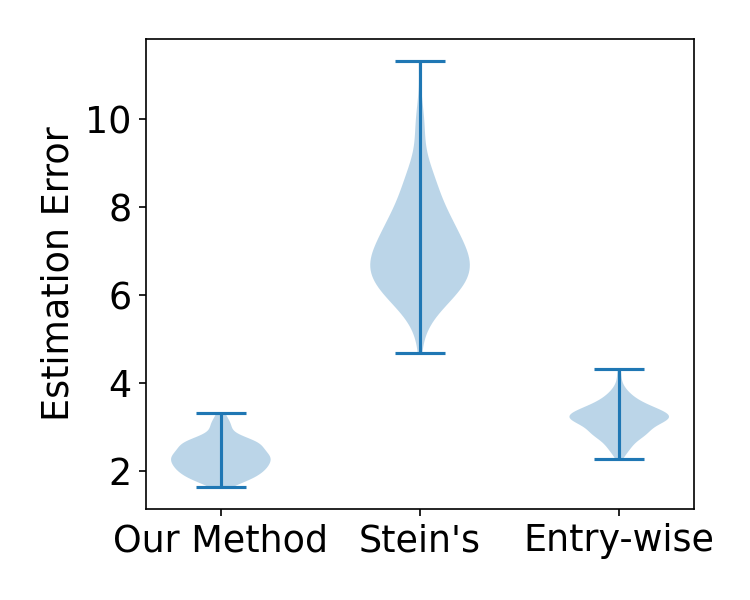} } 
        \subfloat[][$\delta = 0.2$]{\includegraphics[width = 0.3 \textwidth]{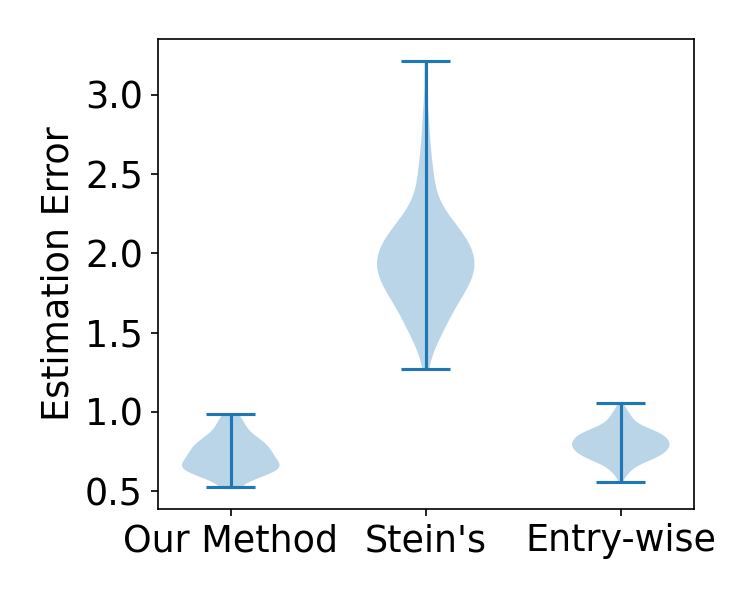} } 
    \caption{Results for the Manifold (II). Each violin plot summarizes estimation error of 100 estimations. Specifically, each estimation in this figure uses $m = 3840$ function evaluations, and 100 estimations are used to generate one violin plot. On the $x$-axis, ``Our method'' corresponds to our estimator (\ref{eq:redef-new}); ``Stein's'' corresponds to the Stein's method (\ref{eq:redef-stein}); ``Entry-wise'' corresponds to the entry-wise estimator (\ref{eq:redef-entry}). Subfigures (a), (b), (c) corresponds to $\delta = 0.05$, $\delta = 0.1$, $\delta = 0.2$. } 
    \label{fig2} 
\end{figure}

\begin{figure}[h!]
    \centering
    \subfloat[][$\delta = 0.05$]{\includegraphics[width = 0.3 \textwidth]{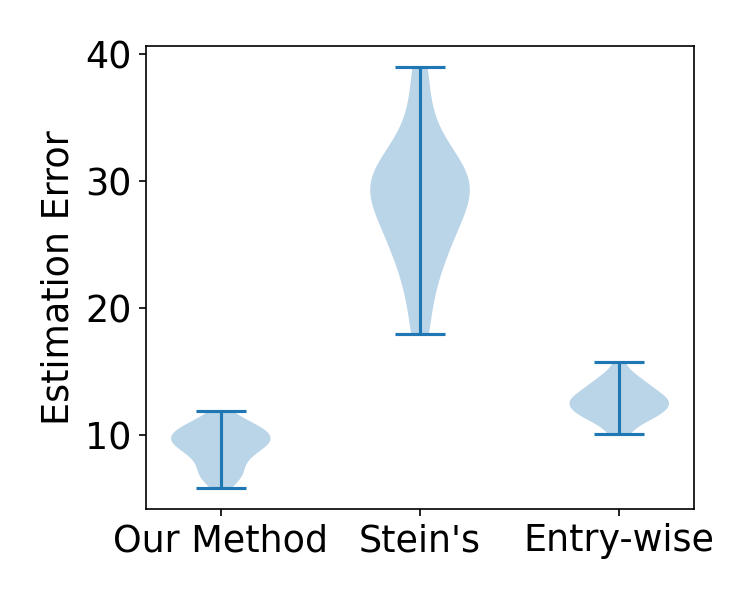} } 
    \subfloat[][$\delta = 0.1$]{\includegraphics[width = 0.3 \textwidth]{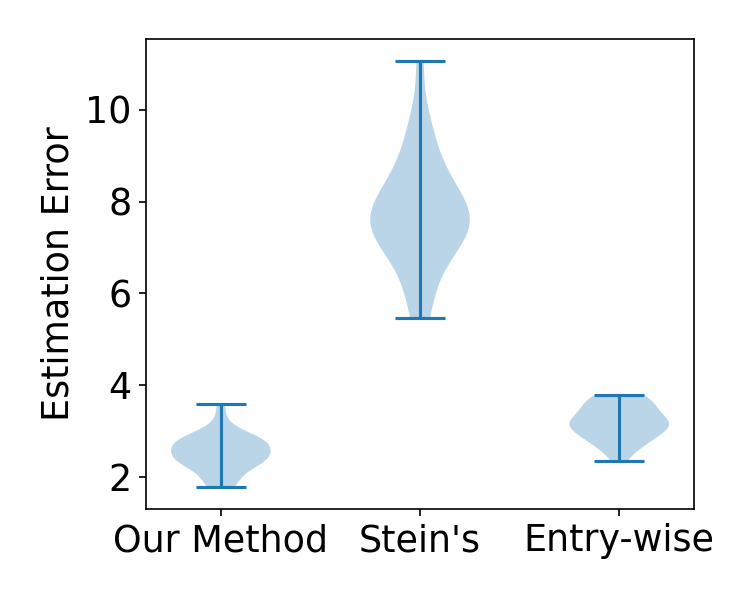} } 
        \subfloat[][$\delta = 0.2$]{\includegraphics[width = 0.3 \textwidth]{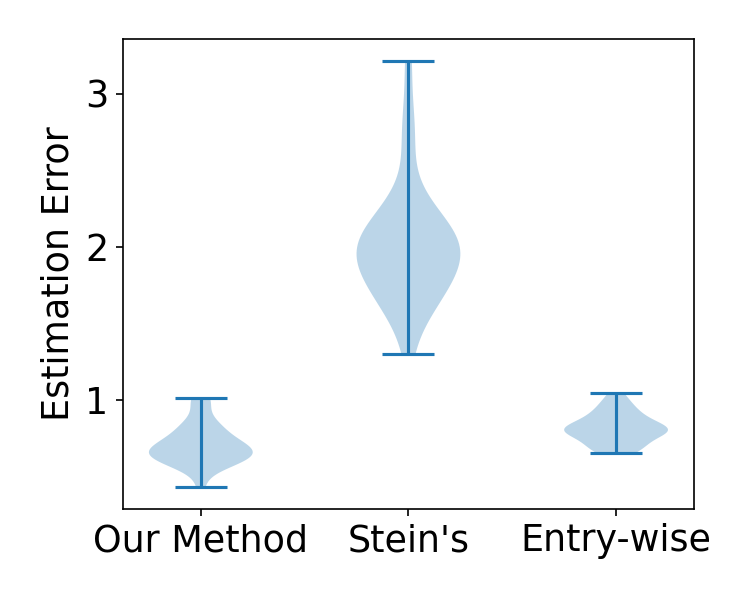} } 
    \caption{Results for the Manifold (III). 
    Each violin plot summarizes estimation error of 100 estimations, with the estimators defined in. Specifically, each estimation in this figure uses $m = 3840$ function evaluations, and 100 estimations are used to generate one violin plot. On the $x$-axis, ``Our method'' corresponds to our estimator (\ref{eq:redef-new}); ``Stein's'' corresponds to the Stein's method (\ref{eq:redef-stein}); ``Entry-wise'' corresponds to the entry-wise estimator (\ref{eq:redef-entry}). Subfigures (a), (b), (c) corresponds to $\delta = 0.05$, $\delta = 0.1$, $\delta = 0.2$. 
    } 
    \label{fig3} 
\end{figure}

\end{document}